\providecommand{\citet}[1]{\citeauthor{#1}\,[\citeyear{#1}]}
\providecommand{\citep}[1]{\cite{#1}}
\newtheorem{thm}{Theorem}
\newtheorem{Problem}{Problem}
\newcommand{\argmax}[1]{\underset{#1}{\operatorname{argmax}}\;}
\newcommand{\matblock}[1]{\left[\begin{array}#1\end{array}\right]}
\title{\LARGE \bf Information-based Active SLAM via Topological Feature Graphs}
\author{Beipeng Mu$^1$ \quad Matthew Giamou$^1$ \quad Liam Paull$^2$ \quad Ali-akbar Agha-mohammadi$^3$ \\ John Leonard$^2$ \quad Jonathan How$^1$
\thanks{$^{1}$Laboratory for Information and Decision Systems,
	MIT, 77 Mass Ave, Cambridge, MA, USA
	{\tt\small \{mubp, mgiamou, jhow\}@mit.edu} }%
\thanks{$^{2}$Computer Science and Artificial Intelligence Laboratory, MIT, 77 Mass Ave, Cambridge, MA, USA,
	{\tt\small \{lpaull, jleonard\}@mit.edu} }
\thanks{$^{3}$Qualcomm Research, 5775 Morehouse Drive, San Diego, CA, USA,
	{\tt\small aliagha@qualcomm.com }}
}
\begin{document}
\maketitle

\begin{abstract}
Active SLAM is the task of actively planning robot paths while simultaneously building a map and localizing within.
Existing work has focused on planning paths with occupancy grid maps, which do not scale well and suffer from long term drift.
This work proposes a Topological Feature Graph (TFG) representation that scales well and develops an active SLAM algorithm with it.
The TFG uses graphical models, which utilize independences between variables, and enables a unified quantification of exploration and exploitation gains with a single entropy metric. Hence, it facilitates a natural and principled balance between map exploration and refinement. 
A probabilistic roadmap path-planner is used to generate robot paths in real time.
Experimental results demonstrate that the proposed approach achieves better accuracy than a standard grid-map based approach while requiring orders of magnitude less computation and memory resources.  
\end{abstract}

\section{Introduction} \label{sec:introduction}

The exploration of an unknown space is a fundamental capability for a mobile robot, with diverse applications such as disaster relief, planetary exploration, and surveillance. In the absence of a global position reference (e.g., GPS) the robot must simultaneously map the space and localize itself within that map, referred to as SLAM. If a mobile robot is able to successfully recognize parts of the map when it returns to them, referred to as loop closure, then it can significantly reduce its mapping and localization error. The problem of active SLAM focuses on designing robot trajectories to actively explore an environment and minimize the map error.

Previous work has been done on designing trajectories to reduce robot pose uncertainty when the map is known or there exists a global position reference \cite{prentice2009belief, Huang_IROS_2009, Blackmore_TRO_2011, Ali14_IJRR,  vandenBerg_IJRR_2011}. There also exists work that maximizes myopic information gain on the next action with partially known maps \cite{VidalCalleja10_activeslam, Martinez-Cantin2009}. However, in this work, the goal is to build a map of an unknown environment thus the robot needs to plan its path and perform SLAM at the same time (active SLAM) with a focus on \textit{global} map quality.
Active SLAM is non-trivial because the robot must trade-off the benefits of \textit{exploring} new areas and \textit{exploiting} visited areas to close loops\cite{Yamauchi97}.

Previous work has heavily relied on the occupancy grid (OG) map (grid of independent binary random variables denoting occupancy) to compute the information gain on map exploration and check feasibility
For example, the seminal work of Bourgault et al. \cite{Bourgault_IROS_2002} formulates the problem as a trade-off between information gain about the map and entropy reduction over the robot pose:
\begin{equation}
u^* = \max_u w_1I_{SLAM}(x,u) + w_2I_{OG}(x, u)
\label{eq:active_slam}
\end{equation}
where $I_{OG}$ is the information gained over the occupancy grid (OG) map (grid of independent binary random variables denoting occupancy) and $I_{SLAM}$ is the information gained over of the robot poses (dependent Gaussian random variables). Similarly, Stachniss et al.\cite{Stachniss_RSS_2005} use a Rao-Blackwellized particle filter (RBPF) to represent the robot poses and the map, and then consider the informativeness of actions based on the expected resultant information gain. Other information metrics within a similar framework, such as the Cauchy-Schwarz quadratic mutual information \cite{Kumar_RSS_15}, the D-optimality criterion \cite{Carillo_ICRA_2012}, and the Kullback-Leibler divergence \cite{Carlone_IROS_2012} have also been proposed recently. These two information gains are computed separately and maintaining the balance between them often requires careful parameter tuning on the weights $w_1$, and $w_2$. 

\begin{figure}[t]
	\centering
	\includegraphics[height=1.8in]{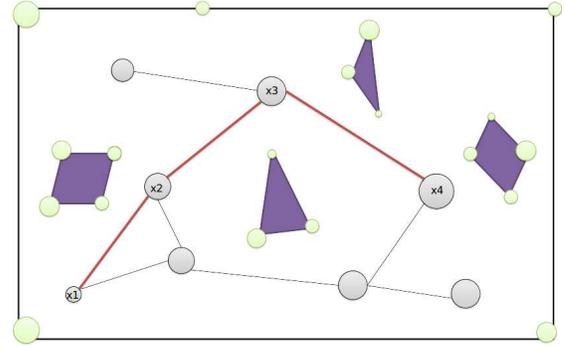}
	\caption{\small Simultaneous planning, localization and mapping problem -- purple polygons represent obstacles, green circles represent features with their size denoting uncertainties in pose estimates. The problem is to find milestones (gray circles) of robot poses, and plan a  trajectory (red line) that minimizes feature uncertainties.} \label{fig splam}
\end{figure}

Recently, graph-based optimization approaches to the SLAM problem have become very popular due to their ability to exploit the naturally sparse connectivity between robot poses and features in the map \cite{Dellaert_IJRR_2006}.
These approaches have proven to have better scalability than the RBPF approaches, which ultimately suffer from particle depletion as the size of the environment grows. Within the graph-based approaches there are two main flavors: pose graphs and feature-based graphs. In the pose-graph approaches, sensor data is used to generate relative transformation constraints between robot poses directly, and an underlying OG map is often required to represent the environment. For example, \cite{Vallve15_ICRA, Valencia_IROS_2012} optimizes the robot trajectory by iteratively computing transformations between laser scans, but still maintains an underlying OG map and plans paths using sample-based approaches such as the probabilistic roadmap or the RRT* algorithm. \cite{Leung08_activeslam} optimizes robot trajectory with features in a structured environment for also maintains an OG for collision check.  Information quantification over the OG map representation carries over known shortcomings of bad scalability and robustness\cite{Grisetti_tutorial_10}. The grid map is also an approximation because the conditional dependencies between the grid cells are discarded.
For example, if there is significant drift in the robot's pose estimate, this uncertainty is not reflected explicitly in the OG map. As a result, a straight corridor will appear curved, but their relative map entropies will be equivalent. 
In addition, OG maps also have large memory footprints.

In a feature-based representation, features are explicitly maintained in the graph and give a layout of the map. However, active-SLAM on feature-based graphs is hard because features do not offer obstacle information, which is crucial for the robot to check path feasibility. 
This paper proposes the first, to our knowledge active SLAM approach that plans robot paths to directly optimize a global feature-based representation without any underlying OG representation. Rather than formulating the problem as area coverage over an OG map \cite{Carlone14_ICRA}, we set it up as entropy reduction over the map features subject to a budget constraint. Since the feature estimates and pose trajectory are necessarily correlated, we can remove the pose uncertainty from the traditional objective function \eqref{eq:active_slam} and directly optimize over the map quality.

When features and robot poses are modeled as joint Gaussian variables, we can directly quantify the information gain of new data on all variables with the same entropy metric.
When the robot moves to a frontier of the mapped area, it can potentially observe new features. The observed features and new features are measured with a unified information metric, therefore we can balance between exploitation and exploration automatically. 
The feature-based graph model is sparser than grid maps, thus scales much better and enables real-time robot state estimation and path planning. Furthermore, it models the environment with dependent features and robot poses rather than with i.i.d. binary cells. Therefore, when the robot returns to a visited place and closes a loop, it can correct long-term drift and propagate the changes to all existing features and poses through the dependencies between variables.

Figure \ref{fig splam} shows an example scenario. The locations of features are marked by green circles and the size of each circle represents its uncertainty. Gray circles represent samples of robot poses, and purple polygons represent obstacles. The planning problem is then to quantify information gains on the samples and find a trajectory connecting the samples that can minimize feature uncertainties.

In summary, there are four primary contributions.
\begin{enumerate}
\item Propose a feature-based topology graph to represent the map of features as well as obstacles in an efficient way. 
\item Develop a feature-focused information metric to quantify uncertainties in the map, in both visited and unvisited places.
\item Present a path planning algorithm using the feature-based topological graph to enable the robot to actively explore with the objective of directly reducing the uncertainty of the map
\item Test the proposed approach in a Gazebo simulated environment, as well as in a real world environment with a turtlebot.
\end{enumerate}
\section{Problem Statement} \label{sec:problem}
Assume that there exists a library of static features that can be uniquely identified as landmarks to localize the robot in the environment, denoted as $\mathbf L=\{ L_1, L_2,\cdots L_M\}$. Notice that the number of features present in the environment could be less than $M$, and is not known a priori. The exact locations of the present features are not known \textit{a priori} either and need to be established by the robot. When moving in the environment, the robot's trajectory is a sequence of poses $ \mathbf X_T=\{X_0,  X_1,\cdots,  X_T\}$, where $X_0$ gives the initial distribution of the robot pose, typically set as the origin with low uncertainty. The robot can obtain two kinds of observations. The odometry $o_t$ is the change between two consecutive poses with probability model $p(o_t|X_t,X_{t-1})$. A feature measurement $z_t^k$ is a measurement between the current pose $X_t$ and feature $y_t$. The corresponding probability model of $z_t^k$ is $p(z_t^k|X_t, L_{y_t^k})$. Denote $z_t = \{z_{t,1},\cdots, z_{t,K_t}\}$.

A factor graph is a sparse representation of the variables. Each node represents either a feature $L_i$ or a robot pose $X_t$. Let $p(\mathbf L)=\prod_{i=1}^M p(L_i)$  denote the prior for features. Each factor is a feature prior $p(L_i)$, an odometry $o_t$  or a feature measurement $z_t^k$.
The joint posterior of $\mathbf X$ and $\mathbf L$ is then the product of priors and likelihood of the observations $\mathbf o=\{o_1,\cdots,o_{T}\}$ and $\mathbf z=\{z_1, \cdots, z_T\}$:
\begin{align}\label{equ:prob_model}
p(\mathbf X, \mathbf L|\mathbf o, \mathbf z) \propto p(\mathbf L) 
\prod_{t=1}^T p(o_t|X_t,X_{t-1}) \prod_{k=1}^{K_t} p(z_t^k|X_t, L_{y_t^k}).
\end{align}
The SLAM problem of jointly inferring the most likely posterior (MAP) feature positions and robot poses can be defined as:
\vspace*{-0.3cm}\begin{align}\label{equ:prob_slam}
\mathbf (\mathbf X^*, \mathbf L^*) = \arg\max_{\mathbf X, \mathbf L} 
p(\mathbf X, \mathbf L|\mathbf o, \mathbf z)
\end{align}
With factor graph representation, \eqref{equ:prob_slam} can be solved by readily available graph-SLAM algorithms/packages such as g2o, iSAM or GTSAM\cite{g2o,Rosen12_isam}.

The problem has traditionally solved by manually operating the robot in the environment to gather a dataset first and then optimize the map in a batch update. In this work, the robot actively plans its own trajectory to incrementally learn the map. Considering that robots are typically constrained in computation/memory, the trajectory should be planned in such a way that resources should be spent on gathering information that is directly related to the robot's goal. The focus of this paper is to incrementally build a map of the environment, therefore information gain is defined as entropy reduction only on variables representing features.

Shannon entropy \cite{Sha48} is a measure of uncertainty in a random variable $x$ thus widely used as information metric. Let $p(x)$ denote the probability distribution of $x$, then entropy $H(x)$ is defined as $H(x)=\sum_{x} p(x)\log p(x)$ for discrete variables and $H(x)=\int p(x)\log p(x)dx$ for continuous variables.

Denote the control command at time $t$ as $u_t$, and let $\mathbf u_T = \{u_1,\cdots, u_T\}$. The active focused planning problem is summarized as follows.

\begin{Problem}\textbf{Active SLAM:}
Design control commands $\mathbf u_T=\{u_1,u_2,\cdots,u_T\}$, such that the robot follows a trajectory that the obtained odometry $\mathbf o=\{o_1,\cdots,o_T\}$ and feature measurements $\mathbf z=\{ z_1, \cdots z_T\}$ can minimize the entropy $H(\cdot)$ over the belief of map features $\mathbf L$:
\begin{equation}
\begin{split}
 \max_{\mathbf u_{T}=\{u_1,\cdots, u_{T}\}}  ~ & H(\mathbf L | \mathbf o, \mathbf z)  \\
\text{s.t.} \quad &  q(\mathbf u_{T})  \leq  c \\
& X_t  = g(X_{t-1}, u_t) \\
& o_t  = X_{t} \ominus X_{t-1} + v, \quad v\sim \mathcal{N}(0,Q)  \\
& z_t^k  = L_{y_t^k} \ominus X_t +w, \quad w\sim \mathcal{N}(0,R) \\
& t=1,\cdots, T
\end{split}
\end{equation}
\label{problem:active}
\end{Problem}
where $q(\cdot)$ is a measure of control cost, in the case of finite time horizon, $q(\mathbf u_T)=T$. Function $X_{t} = g(X_{t-1}, u_t)$ describes the robot dynamics. Function $ o_t  = X_{t} \ominus X_{t-1} + v$ describes the odometry measurement model and $z_t^k  = L_{y_t^k} \ominus X_t +w$ is the feature measurement model. 

Fig.~\ref{fig model_active} presents a graphical model of this problem. $X_t$ represents robot poses, $\mathbf L$ represents environment features. The goal is to design control policies $\mathbf u_T$ to maximize information gain over feature belief $\mathbf L$.

\begin{figure}[t]
\centering
\begin{tikzpicture}
[hidden/.style={circle, draw, minimum size = 1cm}]
	\node[rectangle,  minimum height=0.8cm,minimum width=0.8cm, draw] (plate) at (-0.4,2.5) {$\mathbf L$}; 
	
	\node[hidden] (x0) at (0,0) {$X_0$};
	\node[hidden] (x1) at (2,0) {$X_1$};
	\node[text width=1cm] (dot) at (4,0) {\huge $\cdots$};
	\node[hidden] (xT) at (6,0) {$X_T$};
		
	\node[hidden] (z1) at (2.5,1.5) {$z_1$}; 
	\node[hidden, thick] (u1) at (1,1.5) {$u_1$}; 

	\node[hidden] (zT) at (6.5,1.5) {$z_T$}; 
	\node[hidden, thick] (uT) at (5,1.5) {$u_T$}; 

	\draw[thick,->] (x0) --(x1);
	\draw[thick,->] (x1) --(dot);
	\draw[thick,->] (dot) --(xT);
	
	\draw[thick,->] (plate) -|(z1.north);
	\draw[thick,->] (x1) --(z1);			
	\draw[thick,->] (u1) --(x1);
	
	\draw[thick,->] (plate) -|(zT.north);
	\draw[thick,->] (xT) --(zT);			
	\draw[thick,->] (uT) --(xT);

	
	\end{tikzpicture}
\caption{\small \textbf{Active Focused Planning}. The robot uses landmark measurements $z_t$ and odometry to design a control policy $u_{1:T}$ to maximize information gain over environmental features $\mathbf L$.}\label{fig model_active}
\vspace{-0.2in}
\end{figure}
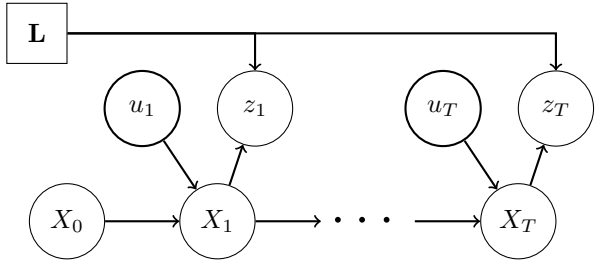

%
%
%

\section{Method} \label{sec:method}
\subsection{Topology feature graph}
One important reason that the use of grid-map representation has been a popular choice for active-SLAM is that a grid-based map contains all the necessary information for path planning. A feature-based representation, although much sparser, lacks information about free/occupied space and the topology of the environment. Consequently, planning paths over a traditional feature-based representation is ill-posed. To overcome this, we propose to store additional information with each feature that allows us to generate a full, yet sparse, representation of the environment over which we can then plan paths.

In this paper, we assume the robot is a ground robot that operates in 2D space\footnote{Extension to 3D scenarios can be achieved by triagularizing obstacle surfaces and is left to future work}. Relying on the fact that features are usually on the surface or corner of obstacles, we propose the \textit{Topological Feature Graph} (TFG) representation. A TFG is a graph $\mathcal{G} = \{L, E\}$, with its vertices representing features and edges representing obstacles. More specifically, if two features are connected by an edge, then these two features belongs to the same flat obstacle surface and the edge is not traversable\footnote{Features can be extended to objects that have sizes, in which case obstacles would be represented by both objects represented by vertices and surfaces represented by edges}.

These edges can be learned from either a depth image, a laser scan or even sequences of images \cite{Pillai_RSS_15}. The robot first segments the depth map or laser scan into several components representing different obstacle surfaces, then checks if two features detected belong to the same component. If so, the robot creates an edge between these two features. This idea is illustrated in Figure \ref{fig:TFG}.

Compared to the grid map representation, the TFG offers several advantages in structured environments. First it requires many fewer variables to represent the environment, and thus provides significant memory savings.
Second, the map complexity can easily adapt to various complexities in the environment. Instead of using equal sized cells at all places, a TFG can model more features in cluttered/narrow spaces and less features in wider/simpler spaces. Third, if new loop closures are detected and drifts of some subgraphs are corrected, the obstacles will be corrected with the feature positions: the robot does not have to relearn the occupancy of the associated space. And finally, this representation has a closed-form collision check for robot path planning rather than sampling-based methods, leading to significant computation savings in path planning.

\subsection{Sequential Planning}\label{sec:observation}
Recall that our goal is to plan robot controls that gain maximal information from the environment as formulated in Problem \ref{problem:active}. Notice that solving Problem \ref{problem:active} in batch is hard in general, because at any time $t$, observations beyond $t$ are not available, thus planning controls $u_t,\cdots u_T$ will require modeling future observations and taking into account all possible outcomes, which is typically intractable.

To solve this problem in a tractable manner, a widely used technique is to split Problem~\ref{problem:active} into $T$ stages, optimize a goal point at each stage \cite{AISTATS07_WilliamsFW, NewStadt_tsp_15}. For each stage, a separate path planner can be used to generate controls.

Let $p(\mathbf L)$ denote a prior of the landmarks. At stage $t$, the observation history $o_{1:t}$ and $z_{1:t}$ can be summarized in a posterior distribution of $\mathbf L, \mathbf X$ at time $t$. 
Denote the maximal posterior(MAP) values of $\mathbf X$ and $\mathbf L$ as $\mathbf X^*_t$ and $\mathbf L^*_t$, they can be obtained by standard SLAM solvers:
\begin{align}\label{equ:posterior}
\mathbf  X^*_t, \mathbf L^*_t
= & \argmax~ p( \mathbf X, \mathbf L|\mathbf o_t, \mathbf z_t) \\
=& \argmax~ p(\mathbf L)\prod_{\tau=1}^t p(o_\tau|X_\tau,X_{\tau-1} ) p(z_\tau|X_\tau,\mathbf L)  \notag
\end{align}
Use the standard procedure of Laplacian approximation of $\mathbf X$ and $\mathbf L$: a Gaussian distribution with the mean being its MAP values $ (\mathbf X^*_t, \mathbf L^*_t)$, and the information matrix $\Lambda$ being the second moment:
\begin{align}\label{equ:laplacian}
& \mathbf X, \mathbf L |\mathbf o_t,\mathbf z_t \sim \mathcal{N}(\mathbf X^*_t, \mathbf L^*_t; \Lambda^{-1}) \\[1em]
& \Lambda = \frac{\partial^2 \log p(\mathbf L)}{\partial(\mathbf X,\mathbf L)^2}
+ \sum_{\tau=1}^t \frac{\partial^2 \log p(o_\tau)}{\partial(\mathbf X,\mathbf L)^2}
+ \sum_{i=1}^{M} \frac{\partial^2 \log p(z_{\tau})}{\partial(\mathbf X,\mathbf L)^2} \notag \\
&= \matblock{{cc}\Lambda_{f} & \Lambda_{fr} \\ \Lambda_{rf} & \Lambda_{r}}
\end{align}
where $\Lambda_f$ corresponding to landmarks and $\Lambda_r$ corresponds to robot poses.
Laplacian approximation gives close-form solutions for entropy. The marginal information matrix for landmarks is $\Lambda_{\mathbf L} = \Lambda_f - \Lambda_{fr}\Lambda_{r}^{-1}\Lambda_{rf}$, and the entropy is \begin{align}
H(\mathbf L|\mathbf o_t,\mathbf z_t) = -\frac{1}{2}\log|\Lambda_{\mathbf L}| + \text{constant}
\end{align}
Further with the associated connectivity edges between landmarks, we obtain the TFG at time $t$. Denote it as $TFG_t$, which summarizes the information the robot has about the environment up until time $t$.

The Laplacian approximation simplifies the information quantification, but directly optimizing over controls $u_t$ is still very difficult. Control inputs $u_t$ affect robot paths though robot dynamics, and optimization under both robot dynamic constraints and obstacle constraints would be computationally prohibitive. As such, the problem is further simplified here by planning a trajectory for the robot first, then using a separate path-following controller to drive the robot along the planned trajectory. In this way, controller design is decoupled from path planning. 
\begin{Problem}\label{problem:incremental}\textbf{Path Planning for Active SLAM}
At stage $t$, given prior topological feature graph $TFG_t$, find a path $\widehat X_t \cdots, \widehat X_\tau, \cdots, \widehat X_{t+1}$, such that the posterior entropy on landmarks $\mathbf L$ is minimized:
\begin{align}
\min_{\widehat X_t, \cdots, \widehat X_{t+1}} & \quad H(\mathbf L|TFG_t, \hat{\mathbf{o}}_{t+1}, \hat{\mathbf{z}}_{t+1}) \notag \\
\text{s.t.} \quad 
& \widehat X_\tau  = g(\widehat X_{\tau-1}, u_\tau) \notag \\
& \hat o_\tau  = \widehat X_{\tau} \ominus \widehat X_{\tau-1} + v, \quad v\sim \mathcal{N}(0,Q) \notag \\
& \hat z_\tau^k  = L_{y_\tau^K} \ominus \widehat X_\tau + w, \quad w\sim \mathcal{N}(0,R) \notag \\
& \tau = t,\cdots, t+1
\end{align}
where $\mathbf{\hat o}_\tau = \{\hat o_t, \cdots, \hat o_\tau, \cdots, \hat o_{t+1}  \}$ represents the odometry obtained along the trajectory. And  $\mathbf{\hat z}_{t+1}=\{ \hat z_t, \cdots, \hat z_\tau,\cdots \hat z_{t+1} \}$ represents landmark measurements obtained along the trajectory. 
\end{Problem}
Given the path $\widehat X_t, \cdots, \widehat X_{t+1}$ and the partial TFG at time $t$, a separate path-following controller could be used to drive the robot along the trajectory. In this way, path planning and control are decoupled from the active SLAM problem, and we gain performance in computation and speed.

\subsection{Expected Information Gain}
Quantifying the exact information gain from $\widehat X_t$ to $\widehat X_{t+1}$ in Problem \ref{problem:incremental} is challenging because it involves discretizing the trajectory from $X_t$ to $\widehat X_{t+1}$ into a sequence of robot poses $\widehat X_\tau$, then computing the information gain from measurements at each pose. Information gain of measurements on later poses will depend on earlier poses along the path. Therefore, the complexity will grow exponentially with the path length. 
To solve the information quantification problem in real-time, we only plan a goal point for the robot, design the robot to stabilize its pose at the goal point, rotate in-place to obtain accurate observations of the local environment, and compute information gain only on these locally observable landmarks at the goal point. The observation would be some layout of a subset of the local landmarks. As shown in  Figure \ref{fig:observation point}, gray balls denote observation points, and the blue circle indicate the set of landmarks it can observe at those observation points. 

\begin{Problem}\label{prob:incre_goal}\textbf{Goal Planning for  Active SLAM}
At stage $t$, given prior topological feature graph $TFG_t$, find the next goal point $\widehat X_{t+1}$ such that the entropy on landmarks $\mathbf L$ is minimized:
\begin{align}
\min_{\widehat X_{t+1}} & \quad   H(\mathbf L|TFG_t, \hat z_{t+1}) \notag \\
\text{s.t.} \quad 
& \hat z_{t+1} = h(\hat X_{t+1}, \mathbf{L})
\end{align}
\end{Problem}

Goal points also provide a way to segment the overall map into local maps and sparsify the underlying SLAM factor graph: the robot accurately maps the environment at goal points, thus measurements between two goal points contains less information compared to those at goal points. Therefore, landmark measurements along the path are only used to localize the robot, but are not used to update landmark estimates. This may cause some loss of information. However, with this simplification, we can marginalize out robot poses between two observations points, and the SLAM factor graph will become a joint graph of partial graphs at goal points. In  this way, the complexity of the SLAM factor graph only scales with the number of observation points and not the number of robot poses. 

Furthermore, paths are generated with respect to the current estimate of landmark locations. If measurements along a path are used to update landmark estimates, new loop closures may cause shifts in landmark locations. The old path may become invalid and the robot may run into obstacles. Leaving out measurements along the path also avoids this potential failure.
\begin{figure}[t]
	\begin{subfigure}[b]{0.48\columnwidth}
		\centering \includegraphics[trim=80 0 80 0,clip,width= 0.7\columnwidth]{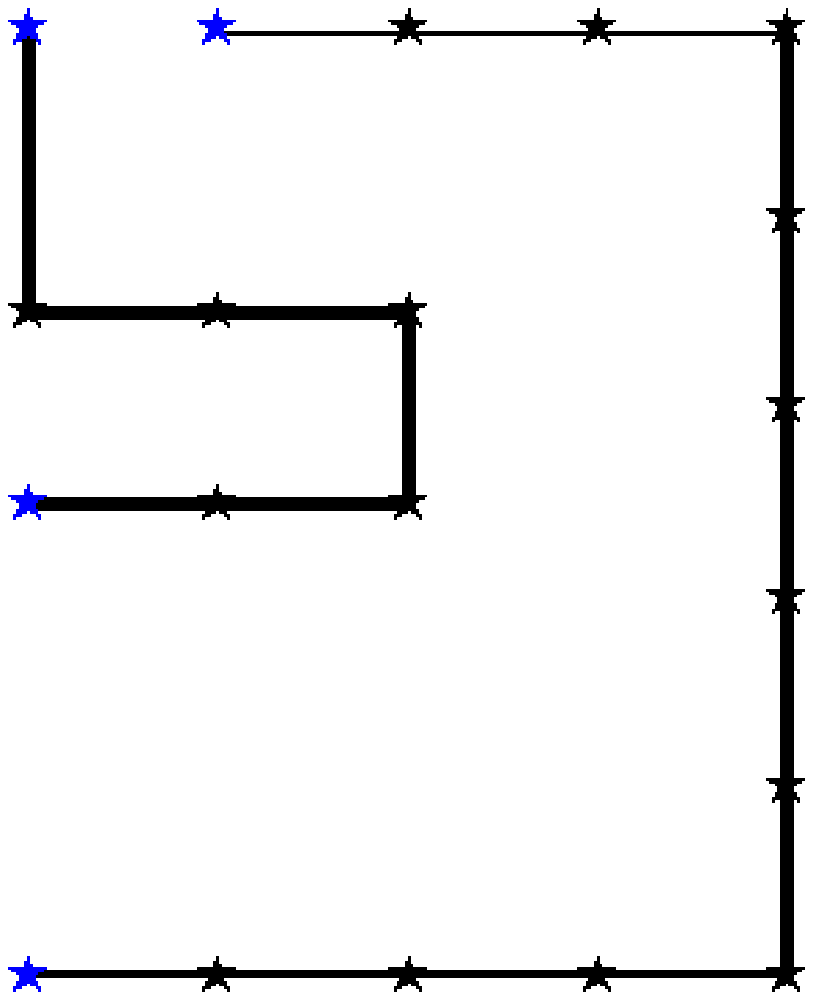}	
		\caption{\footnotesize Topology Feature Graph(TFG)}\label{fig:TFG}		
	\end{subfigure}	
	\begin{subfigure}[b]{0.48\columnwidth}
		\centering \includegraphics[trim=80 0 80 0,clip,width = 0.7\columnwidth]{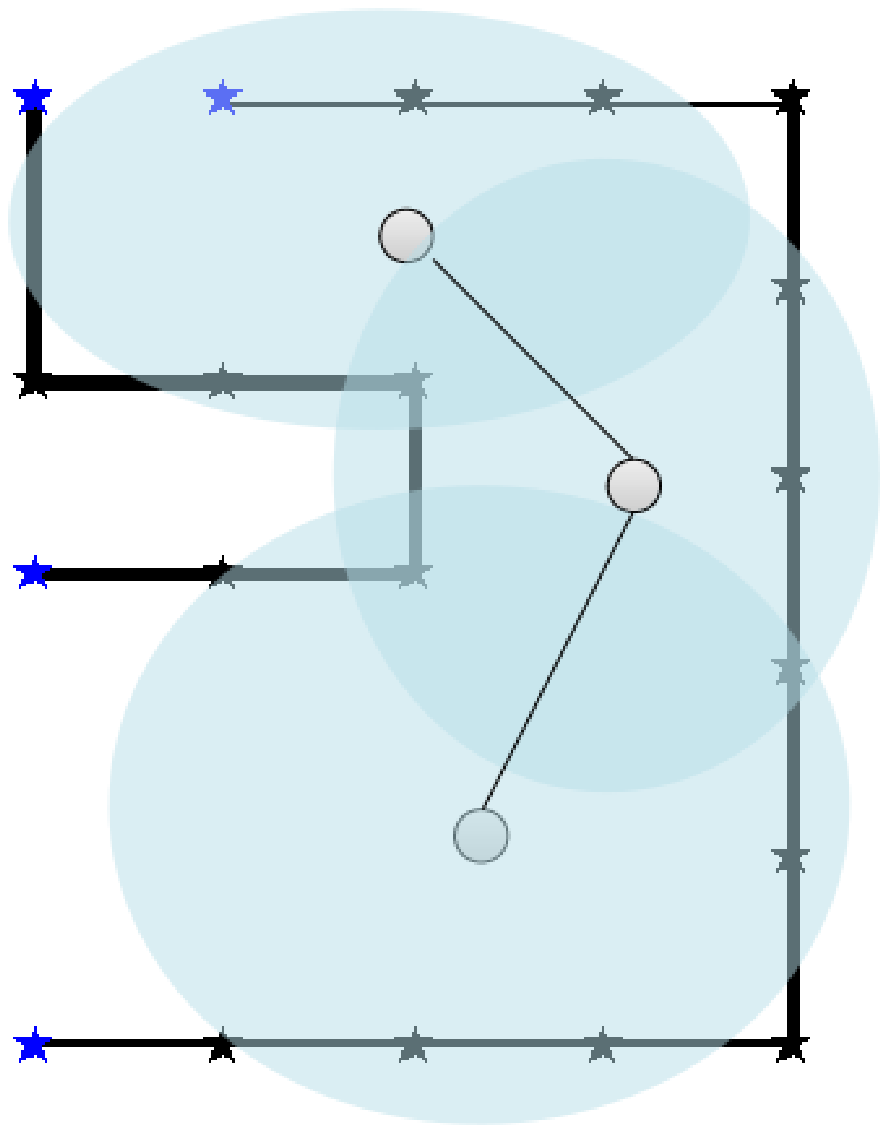}
		\caption{\footnotesize Goal points}\label{fig:observation point}
	\end{subfigure}
\caption{\small Topological Feature Graph (TFG) and goal points. Vertices (stars) represent features, edges (black lines) represent obstacle surfaces, blue stars represent features at a frontier, gray balls represent goal points.  Blue regions illustrate local observable features.}
\vspace{-0.2in}
\end{figure}

Notice that $\widehat X_{t+1}$ is in continuous $\mathbb{R}^2$ space. Different $\widehat X_{t+1}$ would give different combinations of observable landmarks, thus solving problem \ref{prob:incre_goal} exactly would be hard. Instead, we use a random sampling approach. Given $TFG_t$, a location is \textit{reachable} if it can be observed from some previous goal location. The planner samples locations in the robot's reachable space, computes entropy reduction for each goal point, then selects the next goal point as the one that gives maximal entropy reduction.  

The maximal entropy reduction problem can be stated as follows:
\begin{align}\label{equ:entropy_reduction}
\widehat X_{t+1} =& \arg\max_{X_{t+1}} \quad \Delta H(\mathbf L|X_{t+1}, TFG_t) \\
	 =& \arg\max_{X_{t+1}}  \quad H(\mathbf L|TFG_t)- H(\mathbf L| X_{t+1}, TFG_t)\notag 
\end{align}

\begin{thm}\label{thm:explore_exploit}
Set prior covariance for unknown landmarks in such a way that it is much larger than covariance of observed landmarks. Given topological landmark graph $TFG_t$, the entropy reduction at goal location $X_{t+1}$ can be approximated by the sum of entropy reduction on local observable landmark $dH_o$, and of new landmarks $dH_u$
\begin{align}
	\Delta H(X_{t+1}|X_{t+1}, TFG_t) \approx \Delta H_o + \Delta H_u
\end{align}
where $\Delta H_u=n_x\log |I+\sigma_ua_u|$, $n_x$ is the number of new landmarks $n_x$, $\log |I+\sigma_ua_u|$ the expected information gain on an unknown landmarks.
\end{thm}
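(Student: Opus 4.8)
The plan is to decompose the joint information matrix of $TFG_t$ augmented with the new observations at $X_{t+1}$ into blocks corresponding to (i) the previously observed landmarks, (ii) the freshly discovered landmarks, and (iii) the new robot pose $X_{t+1}$, and then exploit a Schur-complement/marginalization argument together with the large-prior assumption on unknown landmarks to show the two contributions decouple. First I would write the entropy in the Laplacian form $H(\mathbf L\mid\cdot) = -\tfrac12\log|\Lambda_{\mathbf L}| + \text{const}$ from the preceding subsection, where $\Lambda_{\mathbf L}$ is the marginal landmark information matrix after marginalizing out all robot poses. Adding the goal point $X_{t+1}$ and its in-place measurements $\hat z_{t+1}$ adds a rank-limited term to $\Lambda$ supported only on $X_{t+1}$ and the locally observable landmarks; marginalizing $X_{t+1}$ back out then gives an additive update $\Lambda_{\mathbf L} \mapsto \Lambda_{\mathbf L} + \Delta\Lambda$, and $\Delta H = -\tfrac12\log|\Lambda_{\mathbf L}+\Delta\Lambda| + \tfrac12\log|\Lambda_{\mathbf L}|$.

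Next I would partition the landmark index set into the "old/observed" set $O$ and the "new" set $U$ (of size $n_x$), and correspondingly block $\Lambda_{\mathbf L}$ and $\Delta\Lambda$. The key structural fact is that before the visit to $X_{t+1}$ the new landmarks in $U$ are uncorrelated with everything else and carry only their prior, so the $U$-block of $\Lambda_{\mathbf L}$ is $\sigma_u^{-1} I$ (a scaled identity from the large prior covariance), with zero cross terms to $O$. After the measurement, $\Delta\Lambda$ has an $O$-block contribution (the usual re-observation update producing $\Delta H_o$) and a $U$-block contribution from the new measurements. Using $\log|\Lambda_{\mathbf L}+\Delta\Lambda| = \log|(\Lambda_{\mathbf L}+\Delta\Lambda)_{OO}| + \log|\text{Schur complement of the }U\text{-block}|$ and the fact that the cross-block coupling between $O$ and $U$ induced by a single goal-point visit is small relative to the $\sigma_u^{-1}$ prior term, I would argue the Schur complement is, to leading order, just the $U$-block $\sigma_u^{-1}I + a_u$ where $a_u$ is the per-landmark information added by one observation. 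That yields $\Delta H_u = \tfrac12\log\frac{|\sigma_u^{-1}I + a_u|}{|\sigma_u^{-1}I|} = \tfrac12\log|I + \sigma_u a_u| = \tfrac{n_x}{2}\log|I+\sigma_u a_u|$ when $a_u$ is the same for each identically-modeled new landmark, matching the claimed $\Delta H_u = n_x\log|I+\sigma_u a_u|$ up to the factor convention, and the remaining $O$-part collapses to $\Delta H_o$.

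The main obstacle will be controlling the cross-coupling terms: a goal-point visit that observes both old and new landmarks from the same pose $X_{t+1}$ does, in principle, induce correlations between $O$ and $U$ after $X_{t+1}$ is marginalized out, so the determinant does not split exactly. The argument has to be that these off-diagonal blocks are dominated by the large prior information gap — i.e., since $\sigma_u^{-1}$ is taken much larger than the information scale of observed landmarks, the Schur complement perturbation is $O(\sigma_u^{-1}\cdot\text{small})$ relative to the diagonal and can be dropped, which is exactly where the hypothesis "prior covariance for unknown landmarks $\gg$ covariance of observed landmarks" is used. I would make this precise with a first-order expansion of $\log\det$ in the small cross-block, bounding the neglected term, and noting that in the limit of an uninformative prior on $U$ the decomposition becomes exact. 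I would also flag the minor bookkeeping point that the stated $\Delta H_u$ omits the $\tfrac12$ that appears in the Gaussian entropy formula, and treat that as a normalization choice rather than a substantive gap.
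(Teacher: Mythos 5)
Your proposal follows essentially the same route as the paper's proof: write the entropy in the Laplace/Gaussian form as a log-determinant of the marginal landmark information matrix, partition into observed landmarks, new landmarks, and the goal pose $X_{t+1}$, marginalize the pose by a Schur complement to get an additive update, neglect the old--new cross-coupling on the strength of the prior assumption, and use the per-landmark block-diagonal structure of the unknown-landmark prior to obtain $n_x\log|I+\sigma_u a_u|$. The only real difference in mechanics is where the cross term is killed: the paper factors out $\Lambda^t_{\mathbf L}$, applies Sylvester's identity $|I-BA|=|I-AB|$ to pull everything into a single log-determinant, and then drops $H_u^T(I+\Sigma_uA_u)^{-1}H_u$ there, whereas you propose a block-LDU split of the $(O,U)$ partition and drop the resulting Schur-complement perturbation. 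Both hinge on exactly the same approximation, and your remark about the factor of $\tfrac12$ is a fair reading of the paper's normalization (the paper drops it too).

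The one substantive problem is that your justification of the key approximation states the hypothesis backwards: you write that ``$\sigma_u^{-1}$ is taken much larger than the information scale of observed landmarks,'' but the theorem assumes the prior \emph{covariance} of unknown landmarks is much larger than that of observed ones, i.e.\ $\sigma_u^{-1}$ is much \emph{smaller}. The paper's argument is precisely that a large $\Sigma_u$ makes $(I+\Sigma_uA_u)^{-1}$ small, so the $H_u$ contribution to the pose-marginalization correction can be discarded; in the opposite (informative-prior) regime the cross-coupling is not dominated and the determinant would not split. Your later sentence about the decomposition becoming exact for an uninformative prior on $U$ shows you have the correct limit in mind, so this reads as a slip rather than a wrong idea, but as written the sentence that is supposed to carry the approximation contradicts the hypothesis it claims to use, and the asserted bound $O(\sigma_u^{-1}\cdot\text{small})$ only does the job once the direction is fixed. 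Correct that, and your outline matches the paper's proof.
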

\begin{proof}
For simplicity, the subscript $t$ is dropped in the following, but it should be noted that this analysis is based on $TFG_t$. 

The information matrix can be written into two parts that corresponds to observed landmarks $\Lambda_f$ or robot poses $\Lambda_r$
\begin{align}
\Lambda & = \matblock{{cc} \Lambda_f & \Lambda_{fr} \\ \Lambda_{rf} & \Lambda_{r}}  \notag 
\end{align}

The robot task here is to map the landmark, therefore we only look at the marginal information matrix on landmarks:
\begin{align}
\Lambda^t_{\mathbf L} & = \Lambda_{f}-\Lambda_{fr}\Lambda_r^{-1}\Lambda_{rf} = \matblock{{cc} \Lambda_o & 0\\ 0 & \Lambda_u}  \notag 
\end{align}
where $\Lambda_o$ corresponds to landmarks observed at least once, and $\Lambda_u$ corresponds to landmarks that have not been observed yet.
At goal point $\widehat X_{t+1}$, denote $\hat z_{t+1}$ are the expected new landmark measurements, then the new joint likelihood becomes:
\begin{align}\label{equ:posterior2}
& p(TFG_t,\hat z_{t+1}; \widehat X_{t+1},\mathbf X, \mathbf L) \notag \\
\sim &  p(TFG_t; \mathbf X, \mathbf L)  p(\hat z_{t+1}|\widehat X_{t+1}, TFG_t)
\end{align}
The corresponding factor graph is the factor graph at $t$ plus new landmark measurements $p(\hat z_{t+1}|\widehat X_{t+1},TFG_t)$. Using the same ML values $X_t^*$, $\mathbf L^*_t$ in \eqref{equ:prob_slam}, the new information matrix $\Lambda^{t+1}_{\mathbf L}$ would be the original information matrix $\Lambda_{\mathbf L}^t$, plus some new terms coming from factors $p(\hat z_{t+1}|\widehat X_{t+1},TFG_t)$:
\begin{align}
\Lambda^{t+1}_{\mathbf L} = \matblock{{ccc} \Lambda_o & 0 & 0\\
	 0 & \Lambda_u & 0 \\ 0 & 0 & 0}
+\matblock{{ccc} A_o & 0 & H_o\\
	0 & A_u & H_u \\
	H_o^T & H_u^T & B}
\end{align}
where
\begin{align}
B &= B_o + B_u \notag \\
 \matblock{{cc}A_o & H_o \\H_o^T & B_o} &= \frac{\partial^2 p(z_{t+1}|\widehat X_{t+1},TFG_t)}{\partial(\mathbf L_o, \widehat X_{t+1})^2}  \\ 
 \matblock{{cc}A_u & H_u \\H_u^T & B_u} &= \frac{\partial^2 p(z_{t+1}|\widehat X_{t+1},TFG_t)}{\partial(\mathbf L_u, \widehat X_{t+1})^2} \notag 
\end{align}
The marginal information matrix on landmarks can be computed from the Schur complement:
\begin{align}
\Lambda^{t+1}_{\mathbf L} &= \matblock{{cc} \Lambda_o+A_o & 0 \\	0 & \Lambda_u+A_u} - 
	H B^{-1} H^T \notag \\
H &= \matblock{{c}H_o\\H_u}
\end{align}
Note that elements in $A$ and $H$ are 0 if the corresponding landmark is not observable at observation point $\hat X_{t+1}$.
The incremental change in the information objective $H(\cdot)$ is: 
\begin{align}
\Delta H &= -\log|\Lambda^t_{\mathbf L}| + \log|\Lambda^{t+1}_{\mathbf L}| \notag \\
& = \log \left|  
	\matblock{{cc}\Lambda_o+A_o & 0 \\0&\Lambda_u+ A_u} - H B^{-1} H^T  \right|  \notag \\
&-\log \left| \matblock{{cc}\Lambda_o & 0 \\0&\Lambda_u} \right| 
\end{align}
Take the inverse of the matrix in second term, combine it with the first term, then use 
$\Lambda^{-1}_o = \Sigma_o$,  $\Lambda^{-1}_u=\Sigma_u$ to obtain
\begin{align}
& \Delta H \notag \\
& = \log \left|  
\matblock{{cc} I+\Sigma_oA_o & 0 \\0& I+\Sigma_uA_u} -
\matblock{{cc}\Sigma_o & 0 \\0&\Sigma_u}
H B^{-1} H^T  \right|  \notag
\end{align}
Extract the first term to get
\begin{align}
\Delta H & =  \log \left|  I +
\matblock{{cc}\Sigma_oA_o & 0 \\0&\Sigma_uA_u} \right| \notag \\
& + \log \left|I - \matblock{{cc}I+\Sigma_oA_o & 0 \\0&I+\Sigma_uA_u}^{-1}
H B^{-1} H^T  \right| \notag
\end{align}
Apply $|I-BA| = |I-AB|$ on the second term
\begin{align}
& =  \log \left|  I + \matblock{{cc}\Sigma_oA_o & 0 \\0&\Sigma_uA_u} \right|  \notag \\
& + \log \left|I - B^{-1} H^T \matblock{{cc} (I+\Sigma_oA_o)^{-1} & 0 \\0 &  (I+\Sigma_uA_u)^{-1}} H \right| \notag \\
& =  \log | I+\Sigma_oA_o|+ \log|I+\Sigma_uA_u|   \\
& + \log \left|I - B^{-1}H_o^T(I+\Sigma_oA_o)^{-1}H_o -  B^{-1}H_u^T (I+\Sigma_uA_u)^{-1} H_u \right| \notag 
\end{align}
\noindent When a landmark has not been previously observed, the prior covariance $\Sigma_u$ is typically large, therefore $H_u^T(I+\Sigma_uA_u)^{-1}H_u$ is small compared to $H_o^T(I+\Sigma_oA_o)^{-1}H_o$. Furthermore, notice that when the prior $\Sigma_u$ and information delta $A_u$ are block diagonal, with each block representing a landmark, $ \log|I+\Sigma_uA_u|  = n_x \log|I+\sigma_u a_u|$, and we have the following approximation: 
\begin{align}\label{equ:info_gain_approx}
\Delta H \approx & \log | I+\Sigma_oA_o|+ \log \left|I - B^{-1}H_o^T(I+\Sigma_oA_o)^{-1}H_o\right|  \notag \\ 
& + n_x \log|I+\sigma_u a_u|  \notag \\
=& \log|I +\Sigma_oA_o - H_oB^{-1}H_o| + n_x \log|I+\sigma_u a_u|\notag \\
=& \Delta H_o + \Delta H_u
\end{align}
where $\Delta H_o= \log|I +\Sigma_oA_o - H_oB^{-1}H_o|$ is the information gain obtained by having new measurements on observed landmarks,  $\Delta H_u=n_x \log|I+\sigma_u a_u| $ is information obtained by having new measurements on previously unobserved landmarks, $n_x$ is the number of new landmarks observed, and $\sigma_u$ and $a_u$ are the variance and information gain of a single new landmark.
\end{proof}
Theorem \ref{thm:explore_exploit} indicates that the information gain on a goal point can be split into two parts: the first part $\Delta H_o$ is the information gain obtained by re-observing and improving known landmarks, and $\Delta H_u$ is the information gain from exploring new landmarks. In our experiments, $n_x$ is computed by using a predefined landmark density in the environment multiplied by the size of a frontier at observation point $X_{t+1}$. 

\subsection{Frontier Detection}
In order to detect frontiers, we track how each landmark is connected to its neighbors. A landmark borders a frontier if at least one side of it is not connected to any neighbors. As shown in  Figure \ref{fig:TFG}, the blue stars represent landmarks at frontiers. At each sample location, the size of frontier is computed as following:
\begin{enumerate}
\item Compute landmarks the robot expects to observe
\item Sort the landmarks according to their orientation relative to the robot
\item If two consecutive landmark are not connected to any neighbors, they represent a frontier.
\end{enumerate}
With this frontier detection approach, we have a uniform information metric for both observed landmarks and unobserved landmarks at frontiers. Thus our approach gives a natural balance between exploration and exploitation: if there are large frontiers offering the potential to discover many new landmarks, the robot will pick observation points to explore frontiers. If there are only small frontiers or none at all, the robot might go to visited places to improve existing landmarks estimates.

\subsection{Path Planning}
In Section \ref{sec:observation}, we obtained a set of collision-free samples, therefore the path planner will only compute connectivity and cost between these samples, and form a probabilistic roadmap (PRM). The trajectory to the next best observation point is generated by computing a minimum cost path on a PRM. The cost of an edge between two sample points involves two factors:
\begin{itemize}
	\item The length of the link, which reflects the distance that needs to be traveled and thus the control costs.
	\item Collision penalty. 
\end{itemize}
Computing the exact collision probability of a given path is a computationally expensive procedure. However, exploiting the fact that a collision check for a point using a TFG representation can be carried out analytically enables expensive methods such as Monte Carlo methods for real-time collision evaluation. 
in this work, assuming Gaussian localization uncertainty, we rely on very efficient approximate methods to compute a measure of risk instead of the exact collision probability. 

Denote $x_o$ as the closest obstacle point. Then $||x-x_o||^2$ represents the squared distance to the closest point and reflects the chance of collision. Thus we use $||x-x_o||^2$ as an additive penalty in the edge cost in path planning. With our TFG representation, computing $||x-x_o||^2$ reduces to computing point-line and line-line distances, which can be achieved trivially.

One of the key benefits of relying on the TFG for path planning is the analytic computation of collisions. In other words, since TFG is composed of set of lines, one can analytically verify a given point is in the obstacle region or not by checking TFG lines around the robot. Such a fast collision check enables accurate methods such as Monte Carlo to evaluate collision probability along the path. We rely on a chance constraint formulation (similar to \cite{Pavone_RAM_2009, Luders14_PhD, Blackmore_TRO_2011}) to compute paths that satisfy $ \Pr(\text{path}\in \text{Obstacle})<\delta $. If one relaxes this constraint to $ \Pr(x_k\in \text{Obstacle}) < \delta \forall k,~~ x_k\sim\mathcal{N}(\hat{x}_{k},P_{k}) $
\begin{align}\label{equ:cc-planning}
\Pr(x_k\in \text{TFG edge}) < \delta~~~ \forall k,~~~ x_k\sim\mathcal{N}(\hat{x}_{k},P_{k})
\end{align}
where, $ x_k $ is the $ k $-th point on the trajectory.

\section{Experiments} \label{sec:experiments}
\subsection{Information Measures}\label{sec:info_balance}
We first illustrate how the proposed framework can balance exploration and exploitation. Figure \ref{fig:explore_exploit} shows an example scenario: black lines represent obstacles, stars represent features with blue stars bordering frontiers. Circles are samples in the free space with color representing their information gain: red is high gain and blue is low. Figure \ref{fig:info_exploit} displays the information gain on observed features: the total information gain is largest at samples that can potentially observe the greatest number of features. On the other hand, Figure \ref{fig:info_explore} shows the information gain on new features. The samples closer to frontiers will have a chance to observe new features, thus they have higher exploration gains than samples further from frontiers. Assuming a fixed new feature density, larger frontiers offer the potential to observe more new features and therefore nearby samples have greater exploration information gain. Figure \ref{fig:explore_exploit} shows the total exploration and exploitation information gain.

\begin{figure}[ht]
\centering
\begin{subfigure}[b]{0.3\columnwidth}
	\centering	\includegraphics[height=1.3in, trim = 100 0 100 0,clip]{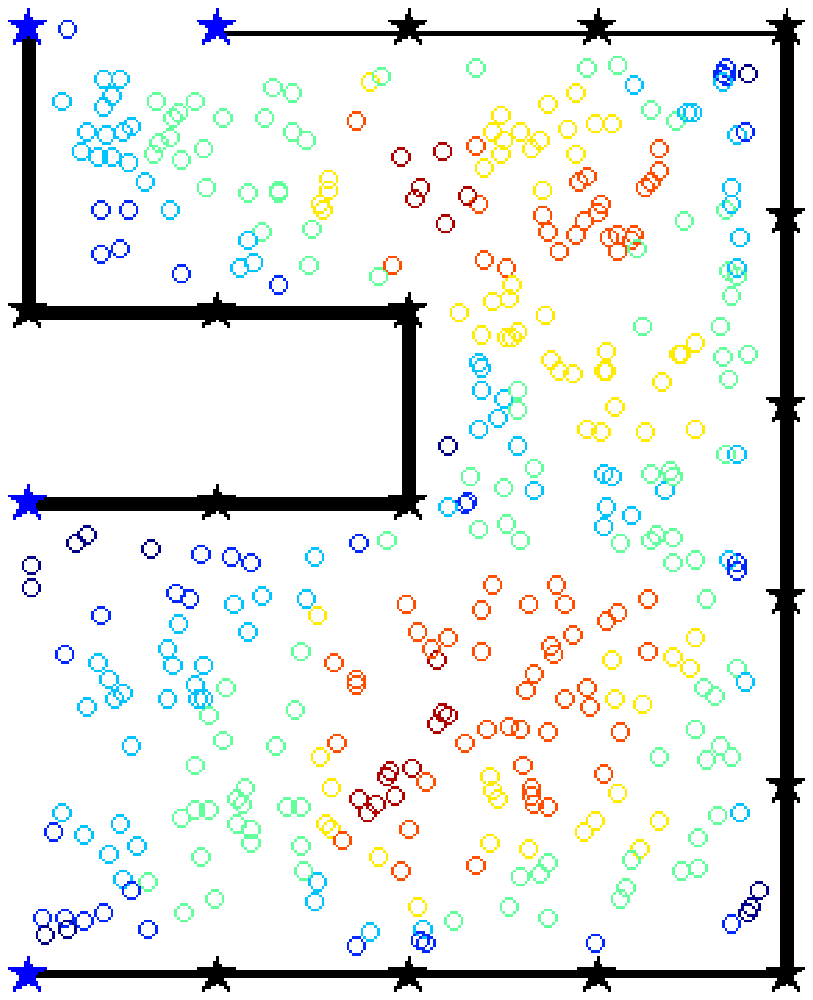}
	\caption{\footnotesize Exploitition} \label{fig:info_exploit}
\end{subfigure}
\begin{subfigure}[b]{0.3\columnwidth}
	\centering	\includegraphics[height=1.3in, trim = 100 0 100 0,clip]{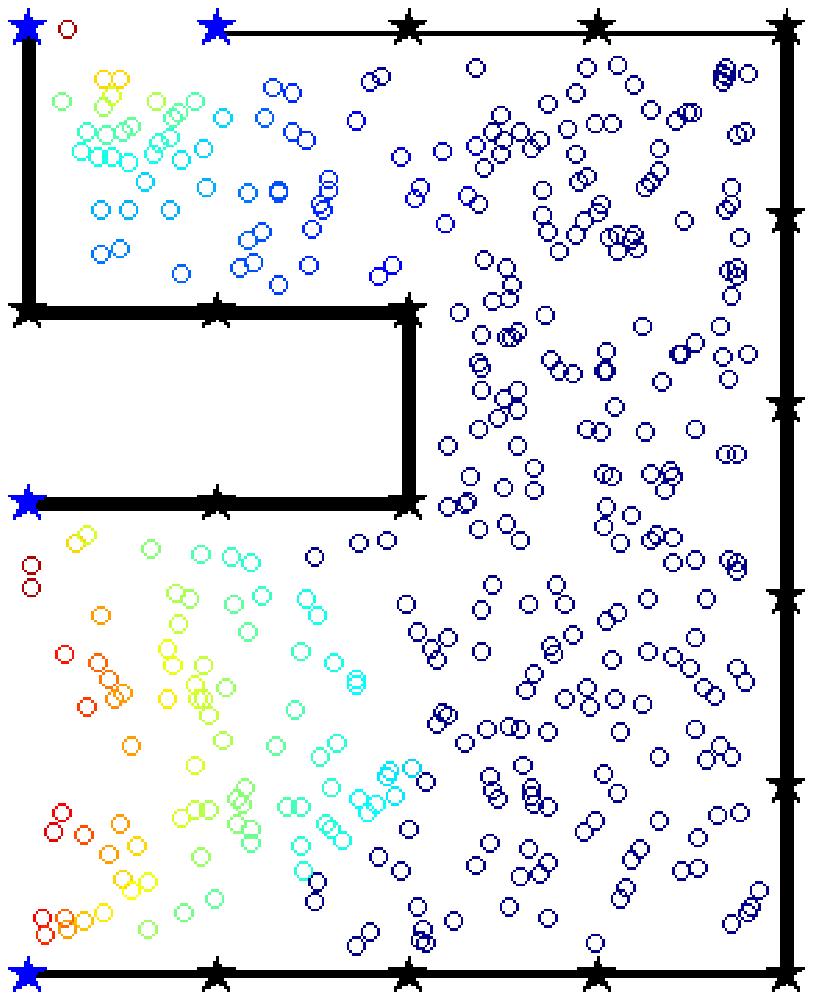}
	\caption{\footnotesize Exploration} \label{fig:info_explore}
\end{subfigure}	
\begin{subfigure}[b]{0.3\columnwidth}
	\centering	\includegraphics[height=1.3in, trim = 100 0 100 0,clip]{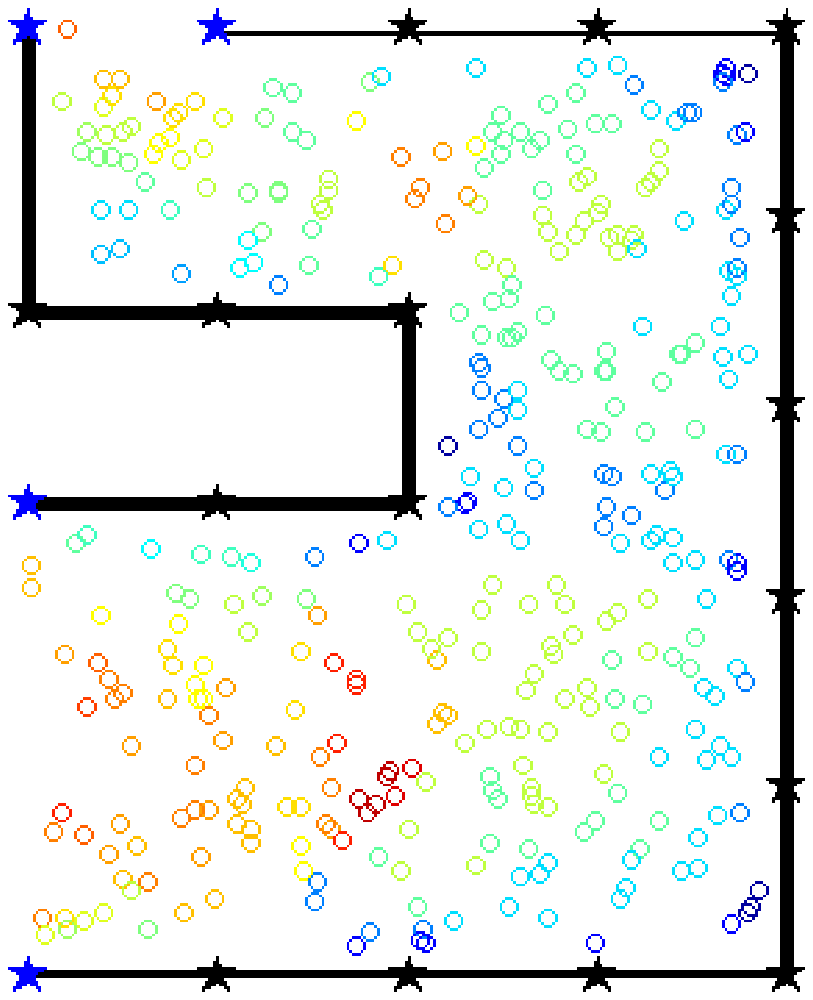}
	\caption{\footnotesize Total}
\end{subfigure}
\caption{\small Information gain. Black lines (obstacles) and stars (features) comprise the TFG. Blue stars indicate frontier features. Circle color represents information gain on samples.}\label{fig:explore_exploit}
\end{figure}

Summing both the exploitation and exploration information, Figure \ref{fig:info_total} displays the total information gain under high/medium/low prior variance on new features. When prior variance on new features is high, observing a new feature will give large information gains, thus the exploration term dominates the exploitation term, and the robot prefers sample points at frontiers. On the other hand, if the prior variance is set to be low, observing new features does not add much information, and the robot will prefer to revisit places with observed features and improve its estimate of their positions. 

\begin{figure}[t]
\centering
\begin{subfigure}[b]{0.3\columnwidth}
	\centering	\includegraphics[height=1.5in, trim = 100 0 100 0,clip]{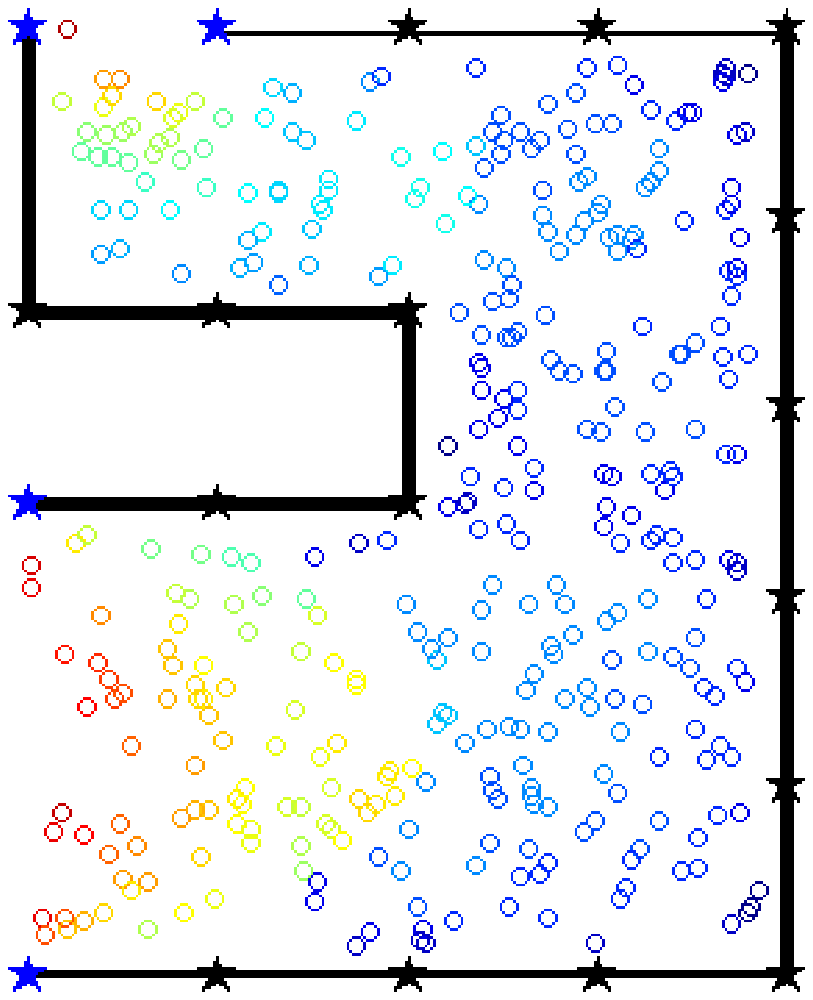}
	\caption{\footnotesize High}
\end{subfigure}
\begin{subfigure}[b]{0.3\columnwidth}	\centering	\includegraphics[height=1.5in, trim = 100 0 100 0,clip]{info_explorebonus_medium}
	\caption{\footnotesize Medium}
\end{subfigure}	
\begin{subfigure}[b]{0.3\columnwidth}
	\centering	\includegraphics[height=1.5in, trim = 100 0 100 0,clip]{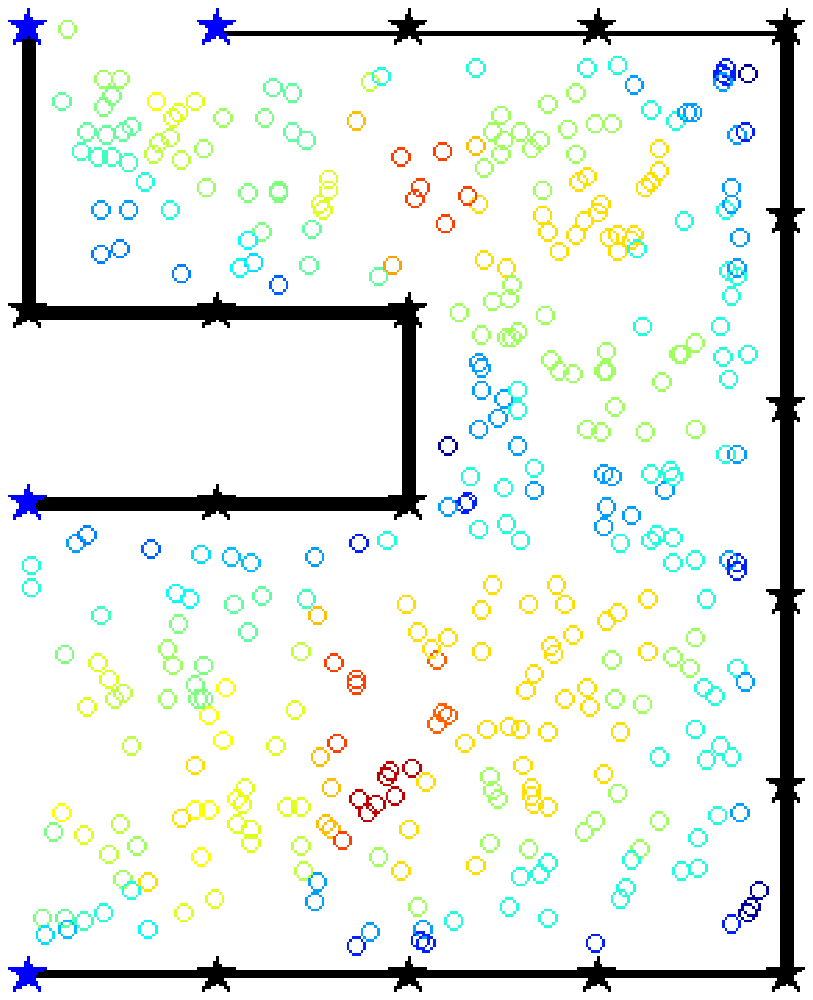}
	\caption{\footnotesize Low}
\end{subfigure}	
\caption{\small Total information gain with varying unseen feature density. When the robot expects to see many features beyond frontiers, information gain at frontiers is high. Otherwise, the robot prefers spots that can observe the most features in visited places.}\label{fig:info_total}
\end{figure}

\subsection{Simulation}
We compared our framework with a nearest-frontier exploration algorithm \cite{Yamauchi97} using the Gazebo simulator. The frontier exploration simulation used the popular GMapping \cite{grisetti2007improved} system for localization and mapping and used wavefront frontier detection \cite{keidar2011fast} to identify frontiers. 

The simulated TurtleBot receives noisy odometry, laser scans, and feature measurements. Table \ref{tab:sim} contains the simulation parameters. Figure \ref{fig:gazebo_map} displays a screenshot of the simulated environment (simulated april tags are spaced roughly one meter apart along the walls). 

Figures \ref{fig:frontier_map} and \ref{fig:active_slam_map} display the maps generated by frontier exploration and TFG active SLAM respectively over one run. Note that there is obvious distortion along the hallways, and the boundary of some obstacles in the center are blurred as well. On the other hand, TFG active SLAM was able to close loops on features and thus maintain the shape of the building in its map.

Figure \ref{fig:error} compares the robot pose error of nearest-frontier and TFG active SLAM over 4 runs. The solid lines represent error mean and shades represent error range. TFG active SLAM consistently has significantly less error in its robot pose estimates, especially in position. Figure \ref{fig:exploration_comparison} compares the map coverage with time spent exploring. In TFG active SLAM, the robot balances exploration with loop closing and is thus slightly slower in covering the whole space when compared with greedy frontier exploration. Table \ref{tab:sim_compare} compares algorithm performance. Although TFG active SLAM takes slightly longer to explore the environment, it uses orders of magnitude fewer variables to represent the world, which leads to memory savings. Frontier exploration also updates particles and the grid map continuously while TFG exploration only updates its map at goal points, requiring only light computation throughout most of its operation.

\begin{table}[h]
\centering
\caption{Simulation parameters}\label{tab:sim}
\begin{tabular}{c|c}
\hline
size of environment & 46m$\times$22m \\
No. of landmarks & 274 \\
sensor range & 10m \\
field of view & 124 degrees\\
particles for gmapping & 100 \\
rate for gmapping update & 0.33Hz \\
rate  for landmark measurements & 10Hz\\
\hline
\end{tabular}
\end{table}

\begin{table}[h]
\centering
\caption{Simulation Performance Comparison} \label{tab:sim_compare}
\begin{tabular}[h]{c|cc}
\hline	& TFG Active SLAM & grid map frontier \\
\hline
No. of variables & 274 & 800000 \\ 
CPU idle time & 75\% & 0\%  \\ 
running time (s) & 2433 $\pm$ 546 & 2293 $\pm$ 375 \\ 
position error (m) & 0.147 $\pm$  0.115 & 5.26 $\pm$ 3.53 \\
orientation error (rad) & 0.0217 $\pm$ 0.016 & 0.0213 $\pm$ 0.0165 \\ \hline
\end{tabular}
\end{table}

\begin{figure*}[th]
\centering
\begin{subfigure}[b]{0.3\textwidth}
	\centering\includegraphics[height=1.4in]{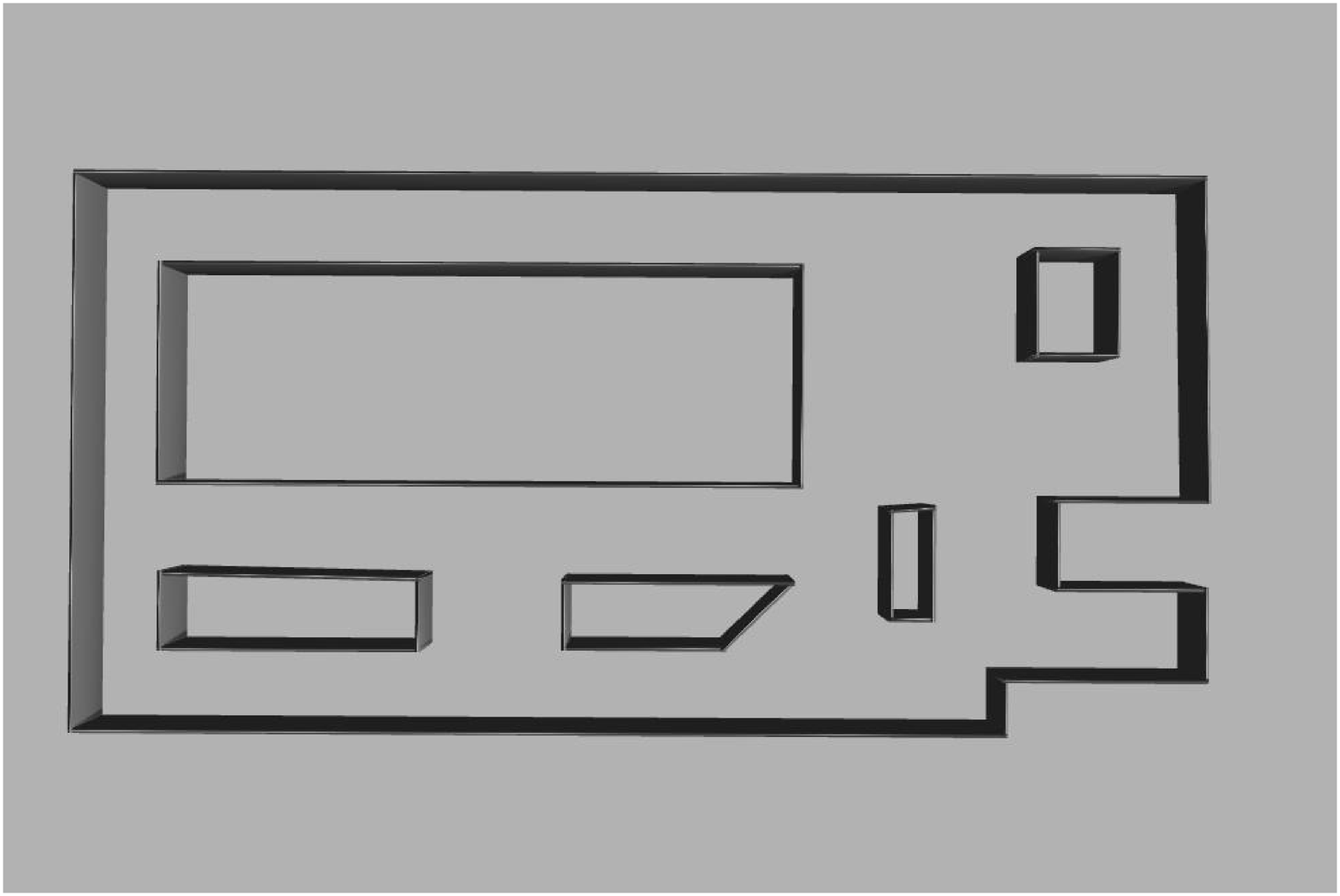}
	\caption{\small Gazebo simulation environment} \label{fig:gazebo_map}
\end{subfigure}
\begin{subfigure}[b]{0.3\textwidth}
	\centering
	\includegraphics[height=1.4in]{frontier_grid_map}
	\caption{\small frontier exploration with grid map} \label{fig:frontier_map}
\end{subfigure}	
\begin{subfigure}[b]{0.33\textwidth}
	\centering
	\includegraphics[height=1.4in]{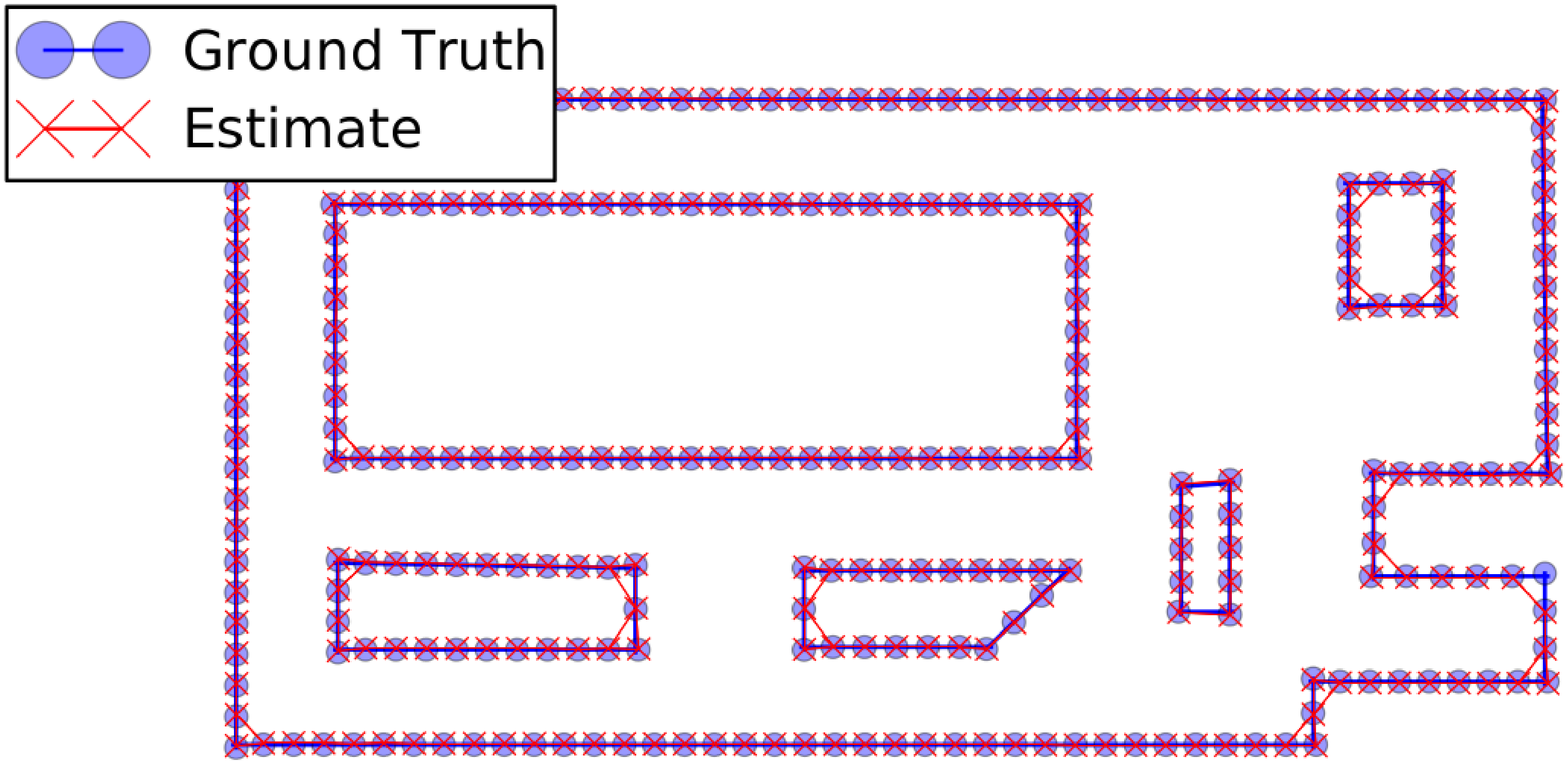}
	\caption{\small active SLAM via TFG} \label{fig:active_slam_map}
\end{subfigure}
\caption{\small SLAM result comparison. When the odometry drifted, frontier exploration with occupancy grid map have distorted maps. While active SLAM using TFG is able close loops on features and have much more accurate maps.}
\end{figure*}

\begin{figure}[th]
	\centering
	\includegraphics[height=2.5in]{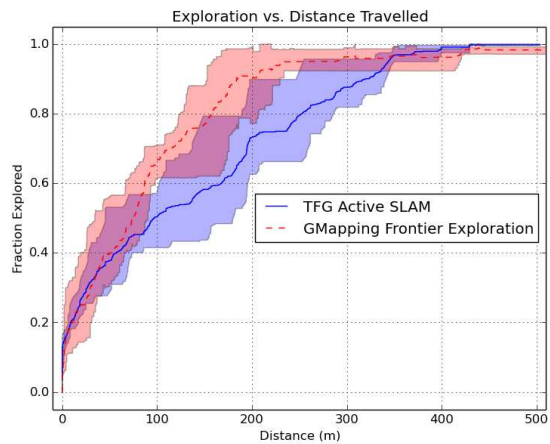}
	\caption{\small Map coverage vs distance travelled. TFG builds an accurate map while exploring and is thus slightly slower than greedy nearest-frontier exploration.} \label{fig:exploration_comparison}
\end{figure}

\begin{figure}[th]
	\centering
	\includegraphics[height=2.8in]{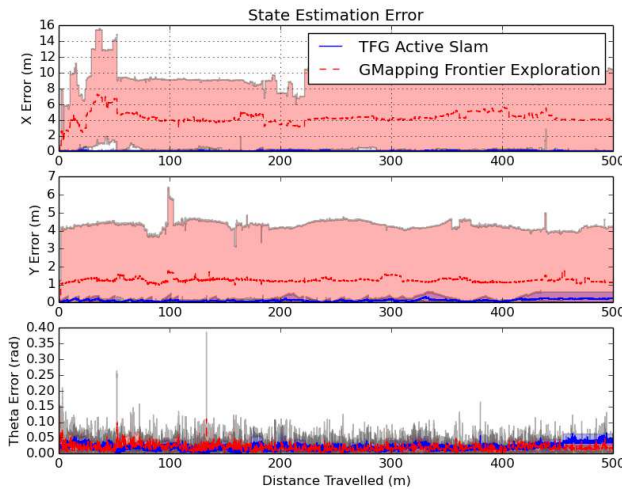}
	\caption{\small Robot pose error over multiple runs. Solid line repsents mean and shade represent range. TFG has consistently smaller errors.} \label{fig:error}
\end{figure}

\subsection{Hardware}
The new framework is tested in an indoor space with the TurtleBot platform, using a computer with specifications listed in Table \ref{tab:hardware}. The computational resources used are readily available in many modern on-board systems. The focus of this paper is not on feature detection or data association, thus april tags\cite{april} are used as features in the indoor space. Figure \ref{fig:hard_view} gives some example views of the environment. Figure \ref{fig:hard_path} shows how the robot's mapping progressed throughout the experiment. It started with a partial map, then gradually picked up the frontiers and expanded the map to cover the space. The black lines are obstacles and black dots are features. The red dot is the robot's current position and the red lines are its planned trajectories. 

\begin{table}[t]
	\centering
	\caption{Hardware Specification}\label{tab:hardware}
\begin{tabular}{|c|c|}
	\hline
	Robot & TurtleBot (Kobuki base) \\ 	\hline
	Processor & Intel Core i3 dual~@2.3GHz \\ 	\hline
	RAM	& 4GB \\ 	\hline
	Operating System & Ubuntu 14.04 \\ 	\hline
\end{tabular}
\end{table}

\begin{figure*}[t]
	\centering
\begin{subfigure}[b]{0.35\columnwidth}
	\includegraphics[height=0.9in, trim = 70 20 50 0, clip]{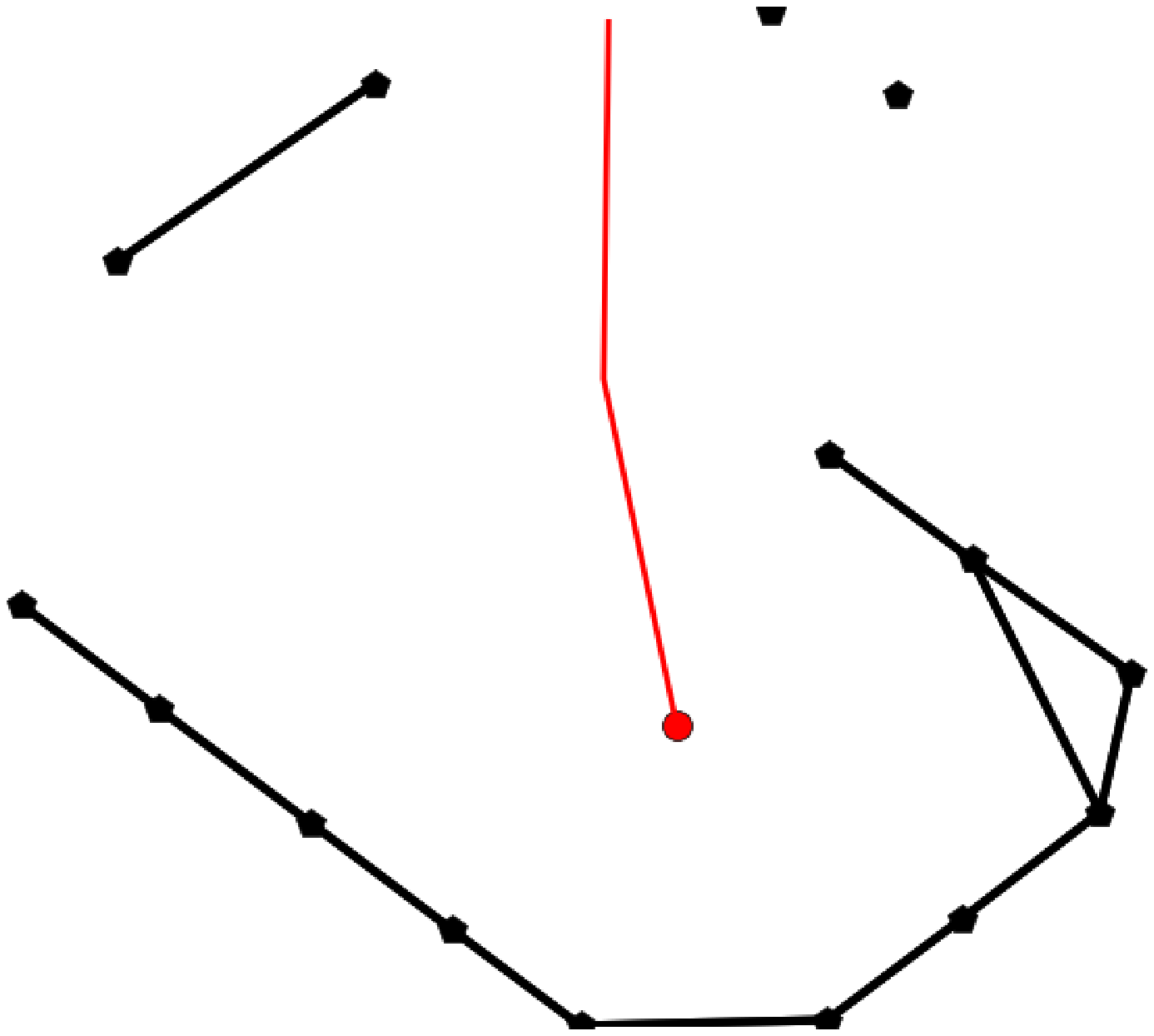} \caption{}
\end{subfigure}
\begin{subfigure}[b]{0.35\columnwidth}
	\includegraphics[height=1.0in, trim = 70 20 50 0, clip]{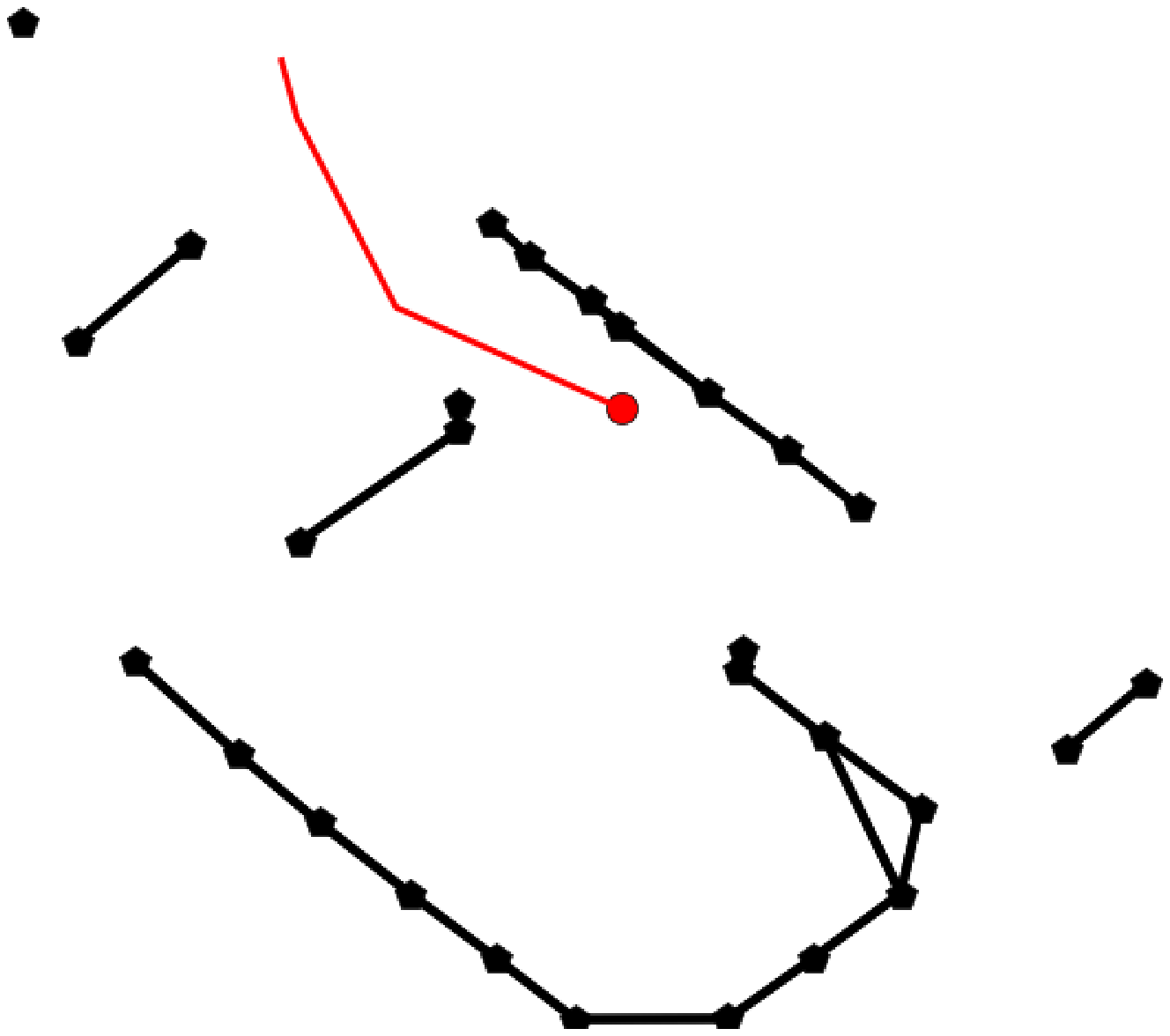} \caption{}
\end{subfigure}	
\begin{subfigure}[b]{0.35\columnwidth}
	\includegraphics[height=1.1in, trim = 70 20 50 0, clip]{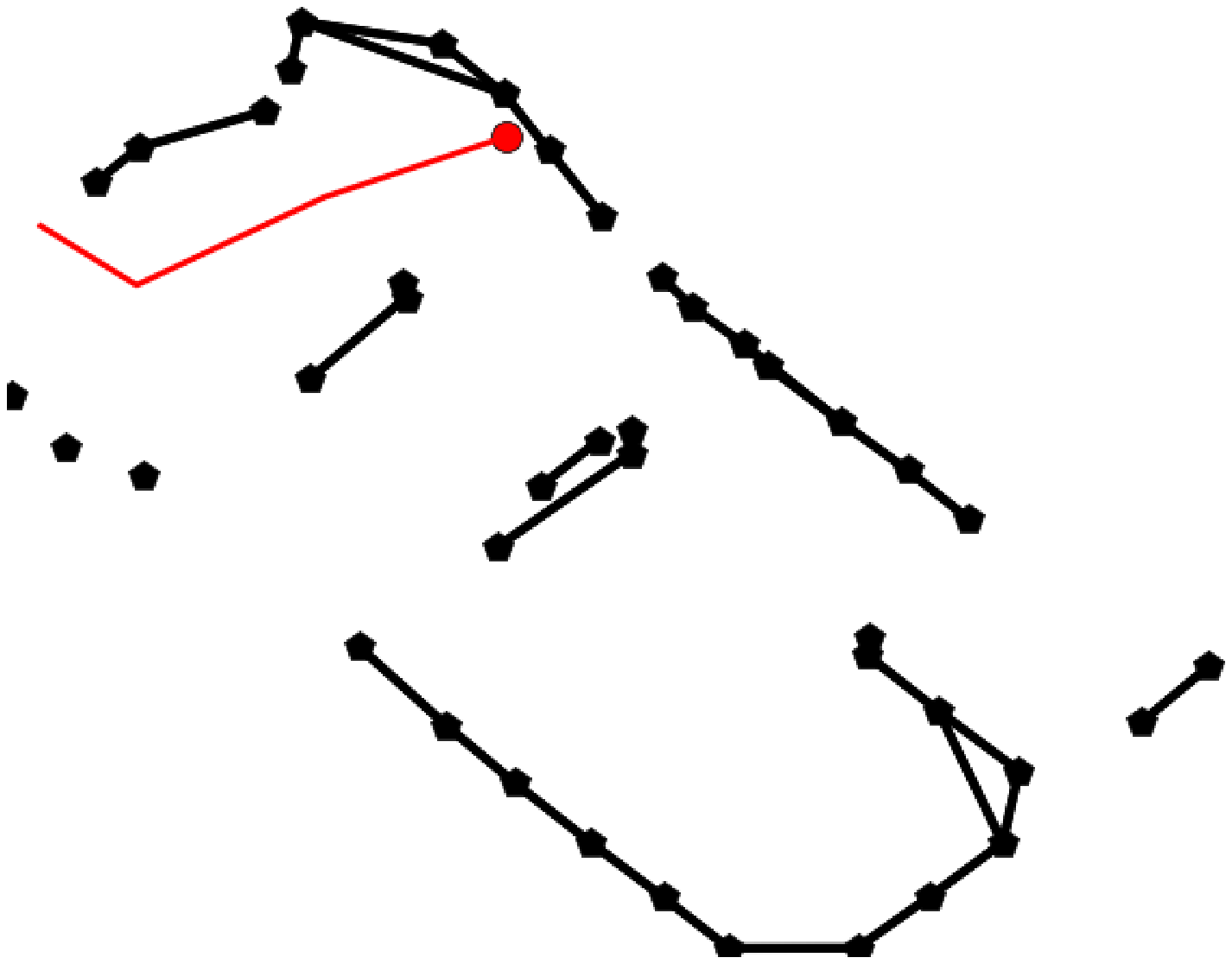} \caption{}
\end{subfigure}	
\begin{subfigure}[b]{0.35\columnwidth}
	\includegraphics[height=1.1in, trim = 70 20 50 0, clip]{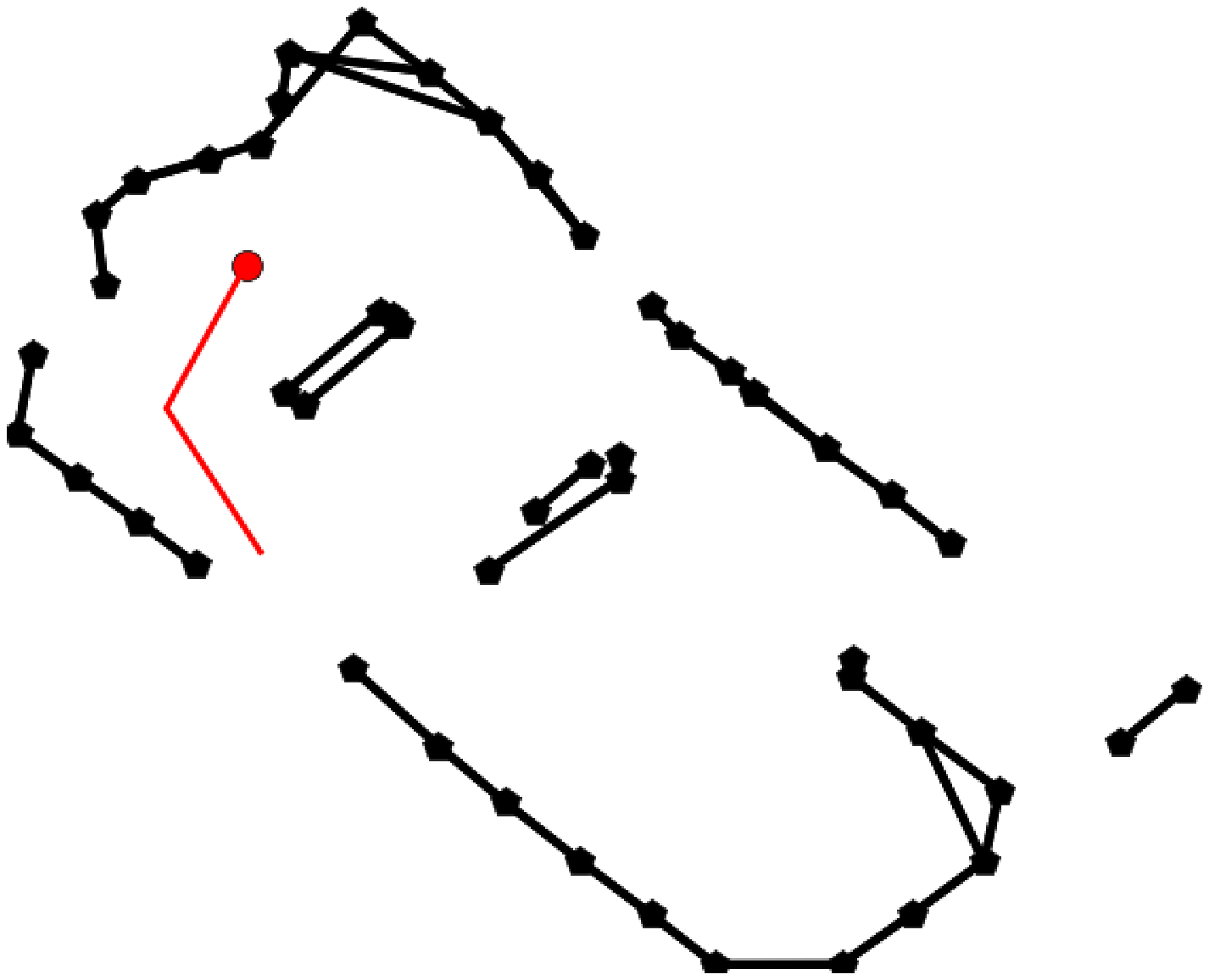} \caption{}
\end{subfigure}
\begin{subfigure}[b]{0.35\columnwidth}
	\includegraphics[height=1.1in, trim = 70 20 50 0, clip]{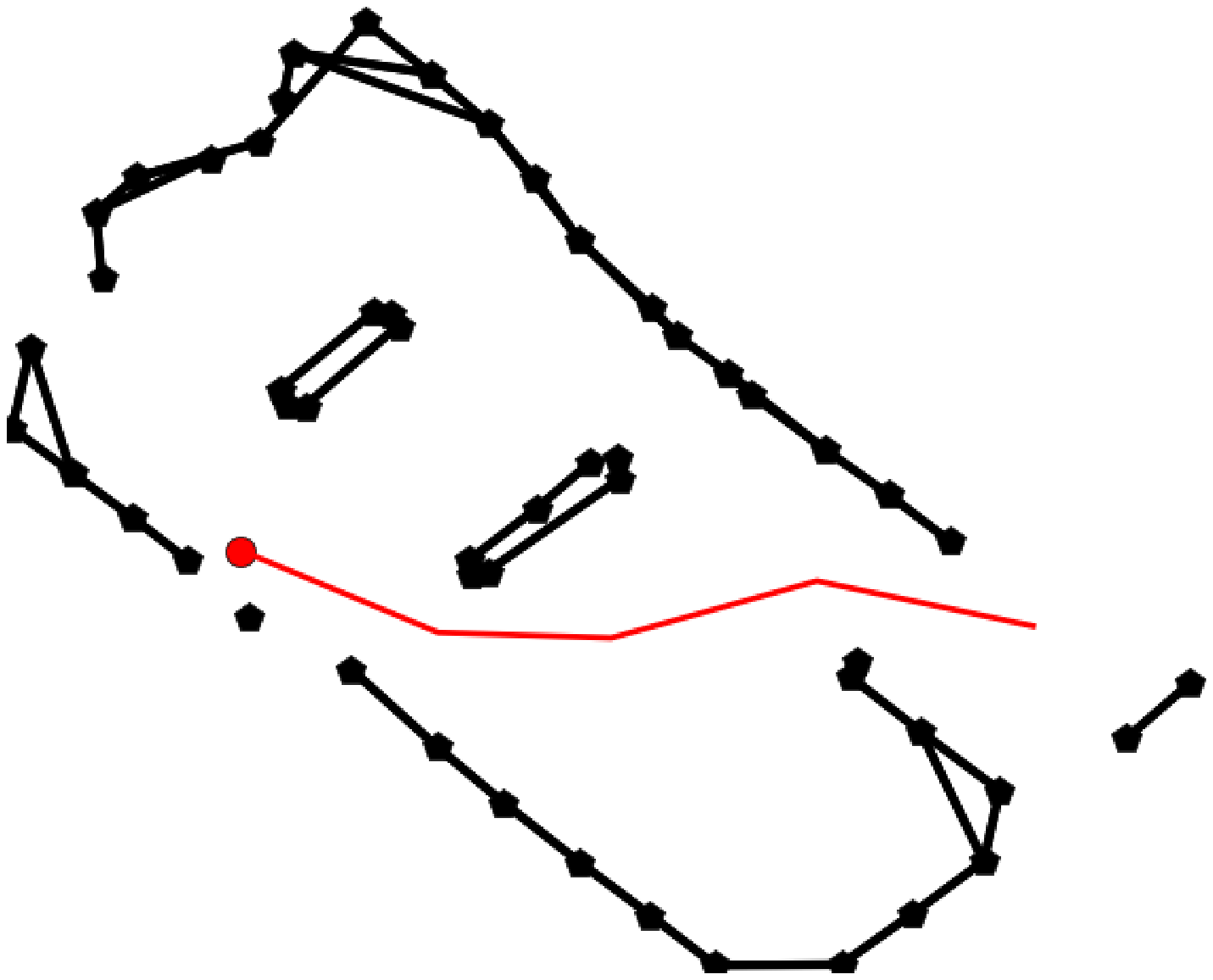} \caption{}
\end{subfigure}
\begin{subfigure}[b]{0.35\columnwidth}
	\includegraphics[height=1.1in, trim = 70 20 50 0, clip]{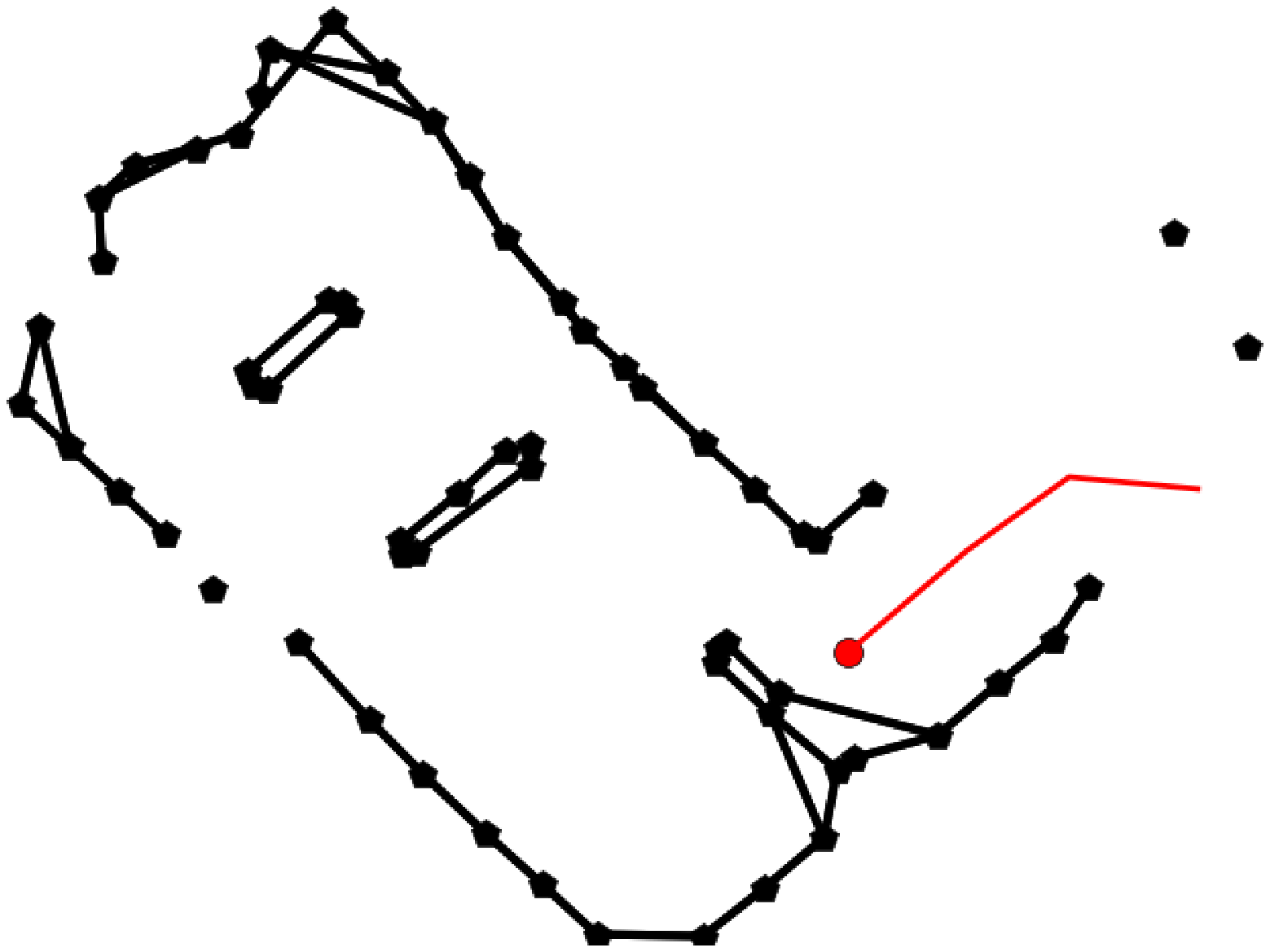} \caption{}
\end{subfigure}	
\begin{subfigure}[b]{0.35\columnwidth}
	\includegraphics[height=1.1in, trim = 70 20 50 0, clip]{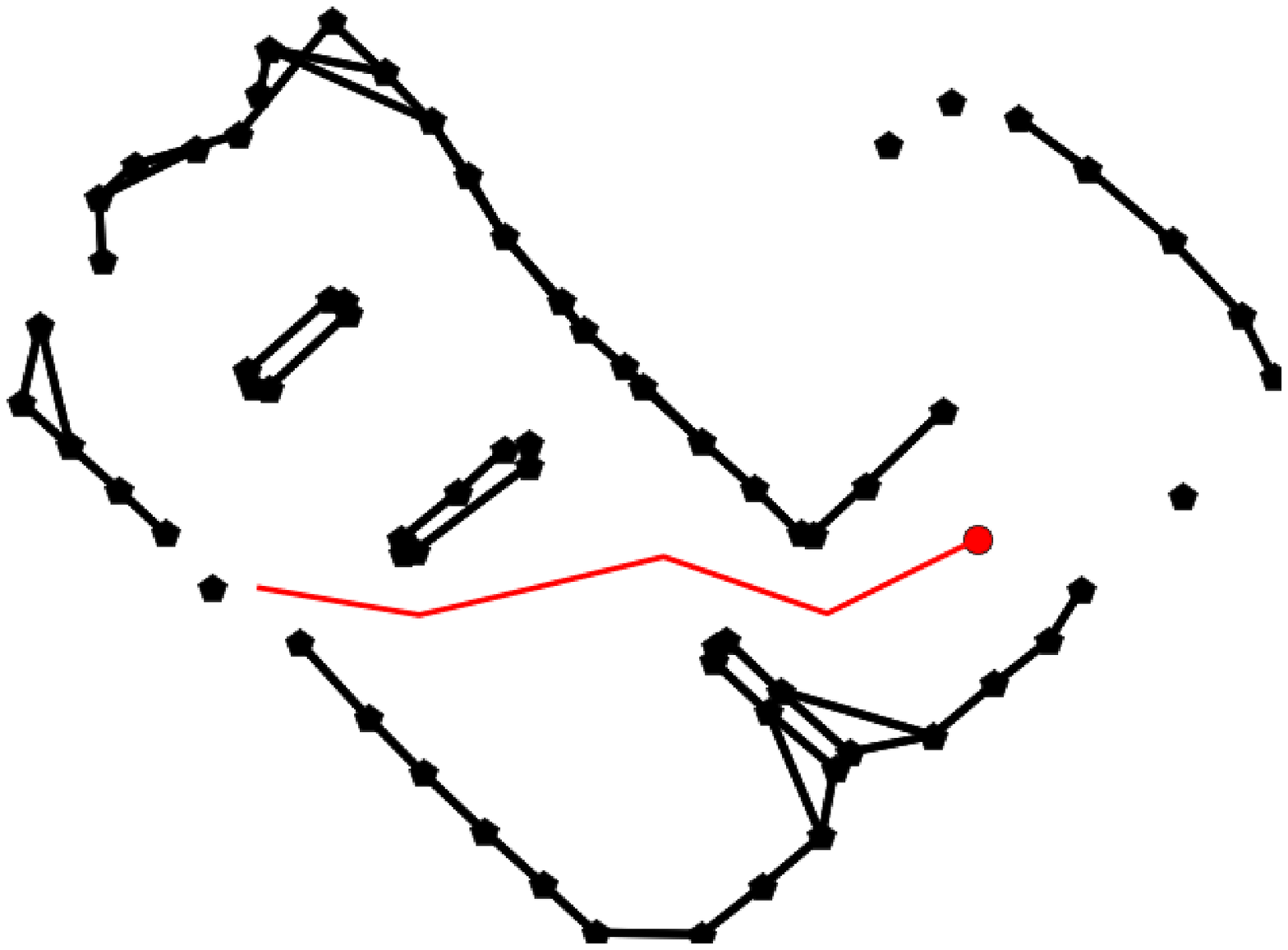} \caption{}
\end{subfigure}	
\begin{subfigure}[b]{0.35\columnwidth}
	\includegraphics[height=1.1in, trim = 70 20 50 0, clip]{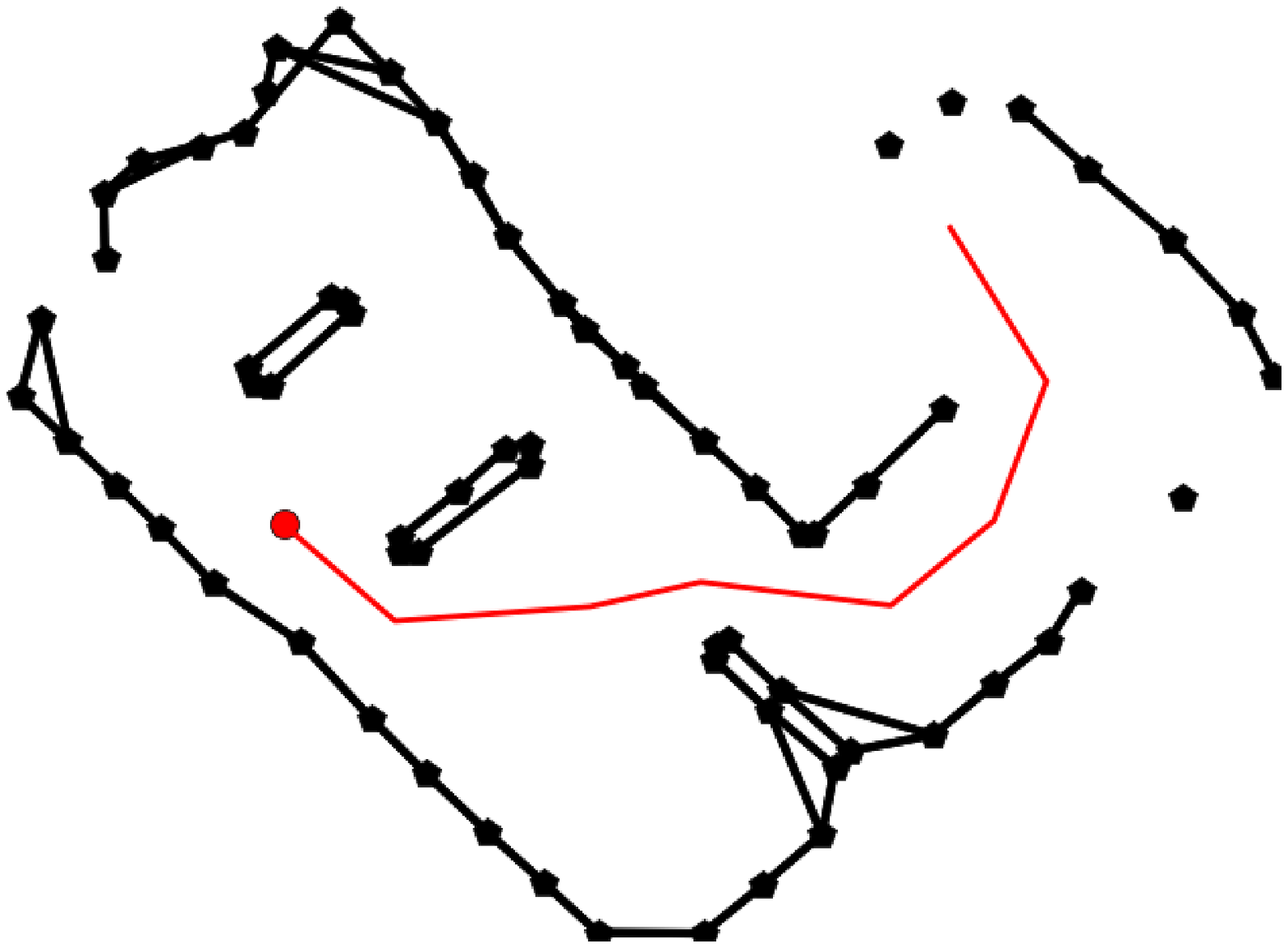} \caption{}
\end{subfigure}	
\begin{subfigure}[b]{0.35\columnwidth}
	\includegraphics[height=1.1in, trim = 70 20 50 0, clip]{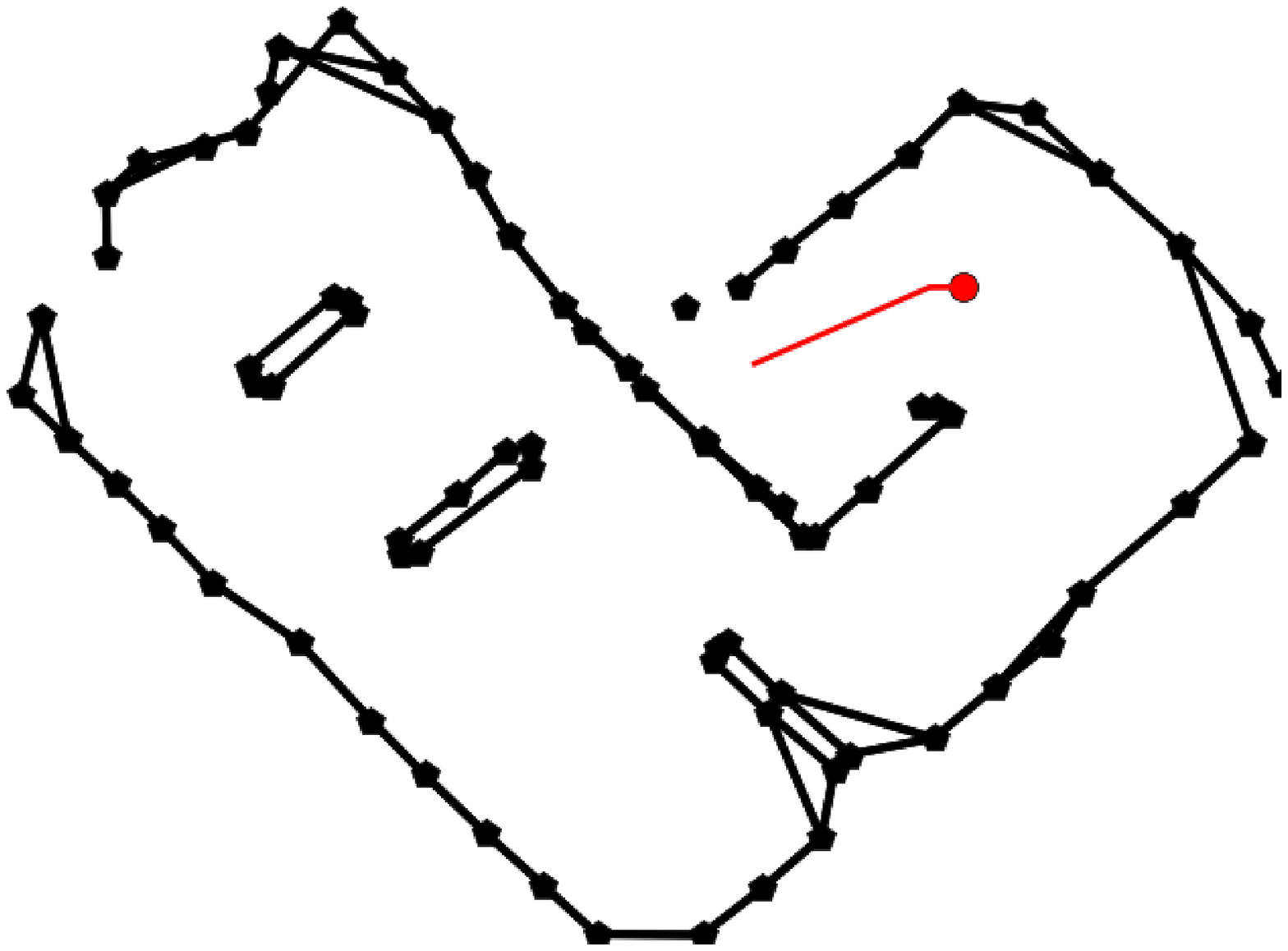} \caption{}
\end{subfigure}	
\begin{subfigure}[b]{0.35\columnwidth}
	\includegraphics[height=1.1in, trim = 70 20 50 0, clip]{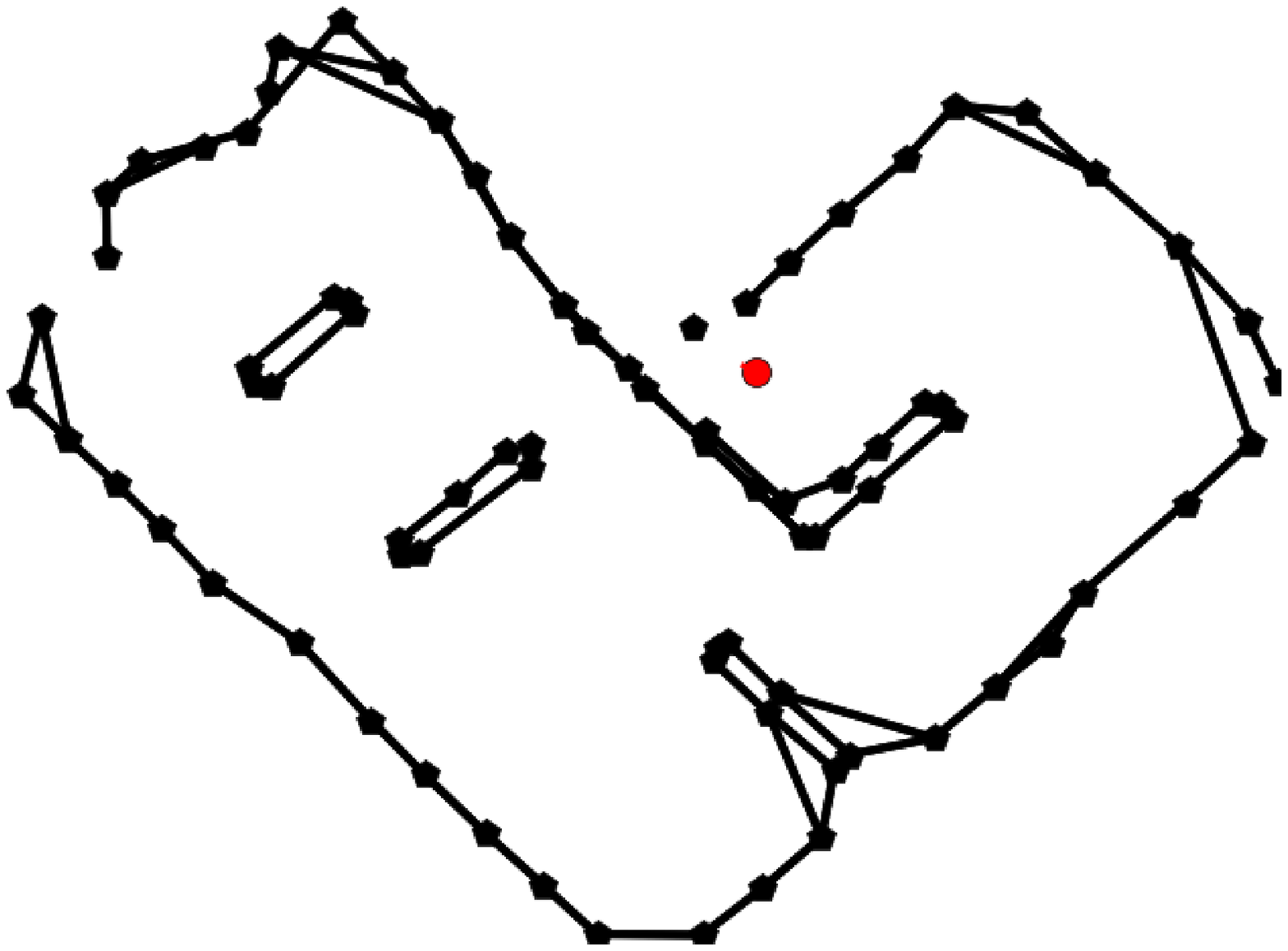} \caption{}
\end{subfigure}	
\caption{\small Robot path and TFG in hardware experiment. Black stars represent april tags, and black lines represent obstacles. The red circle represents the robot's current location, and the red line represents robot's planned trajectory. The robot started with a partial map, then gradually picked up the frontiers and expanded the map to cover the space.} \label{fig:hard_path}
\end{figure*}

\begin{figure*}[t]
	\centering
		\includegraphics[width=0.45\columnwidth]{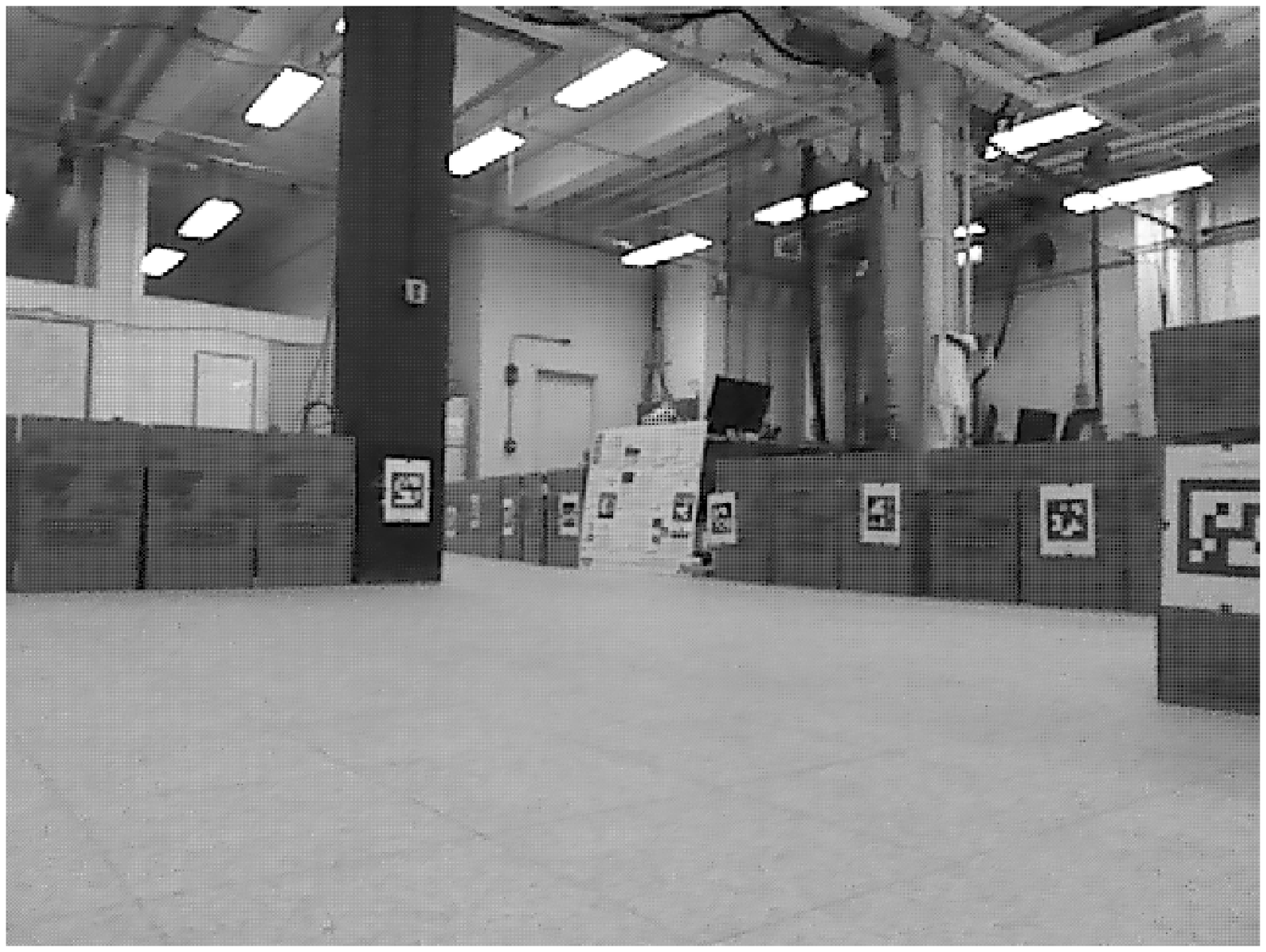}
		\includegraphics[width=0.45\columnwidth]{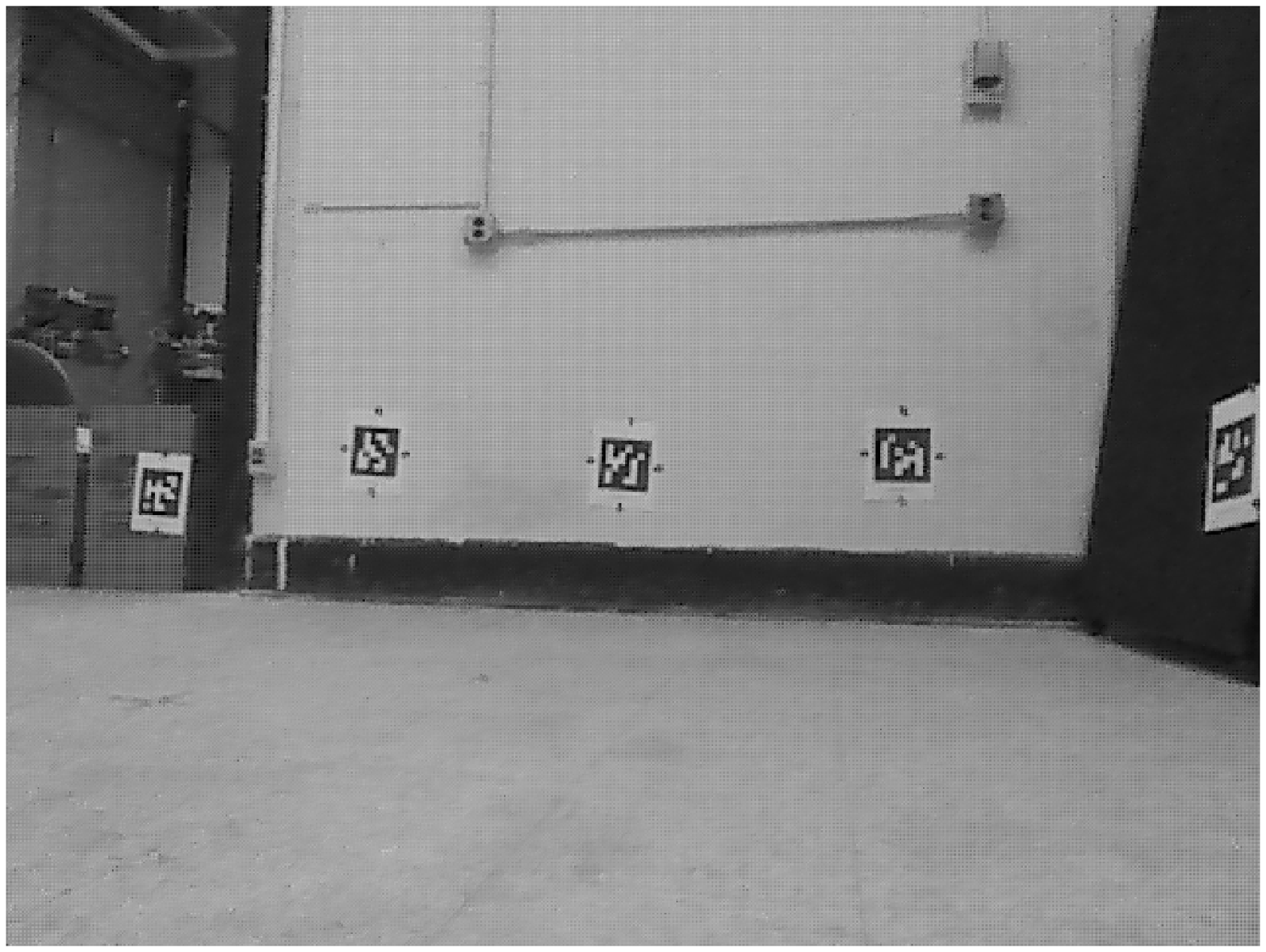}
		\includegraphics[width=0.45\columnwidth]{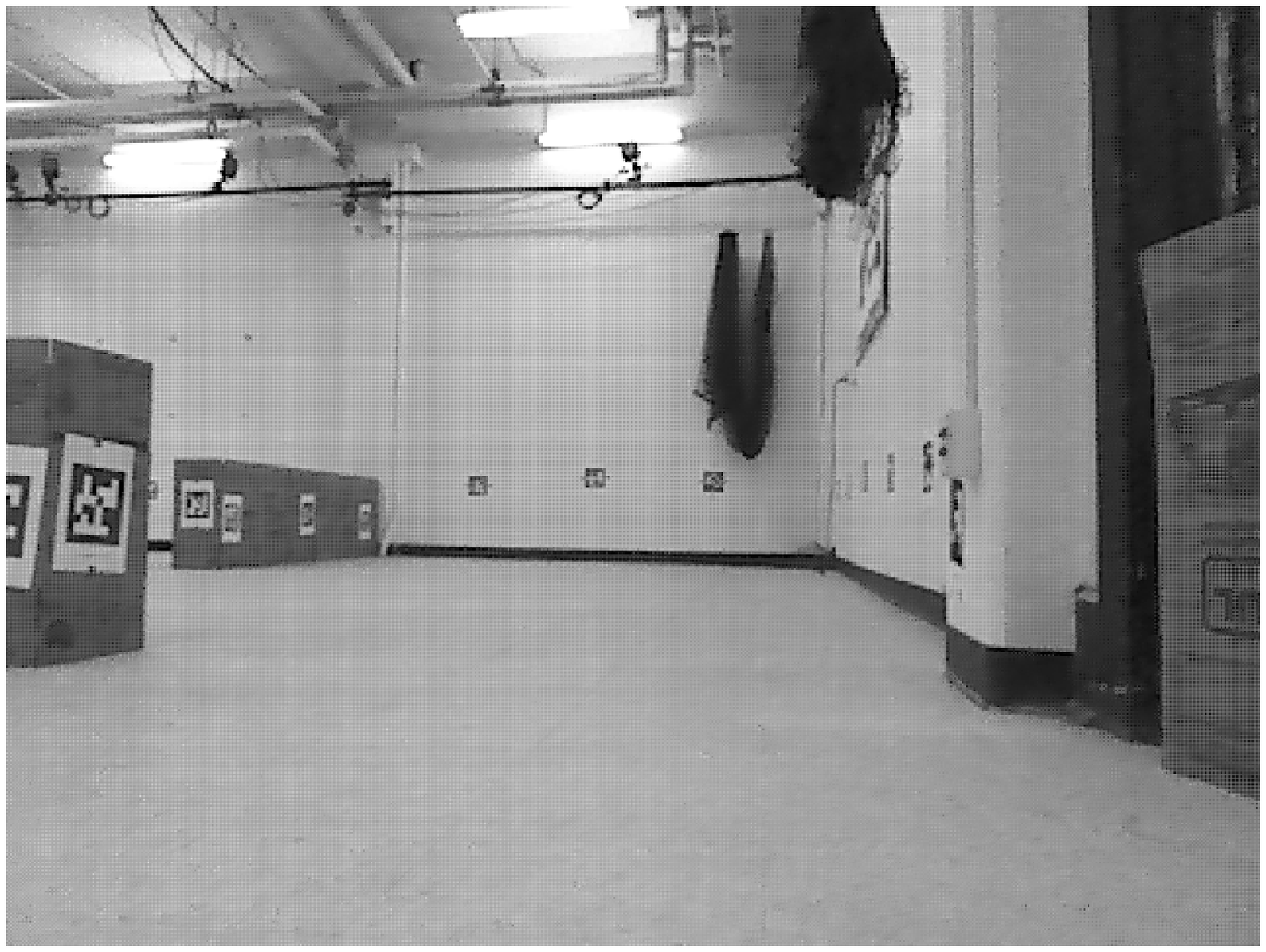}
		\includegraphics[width=0.45\columnwidth]{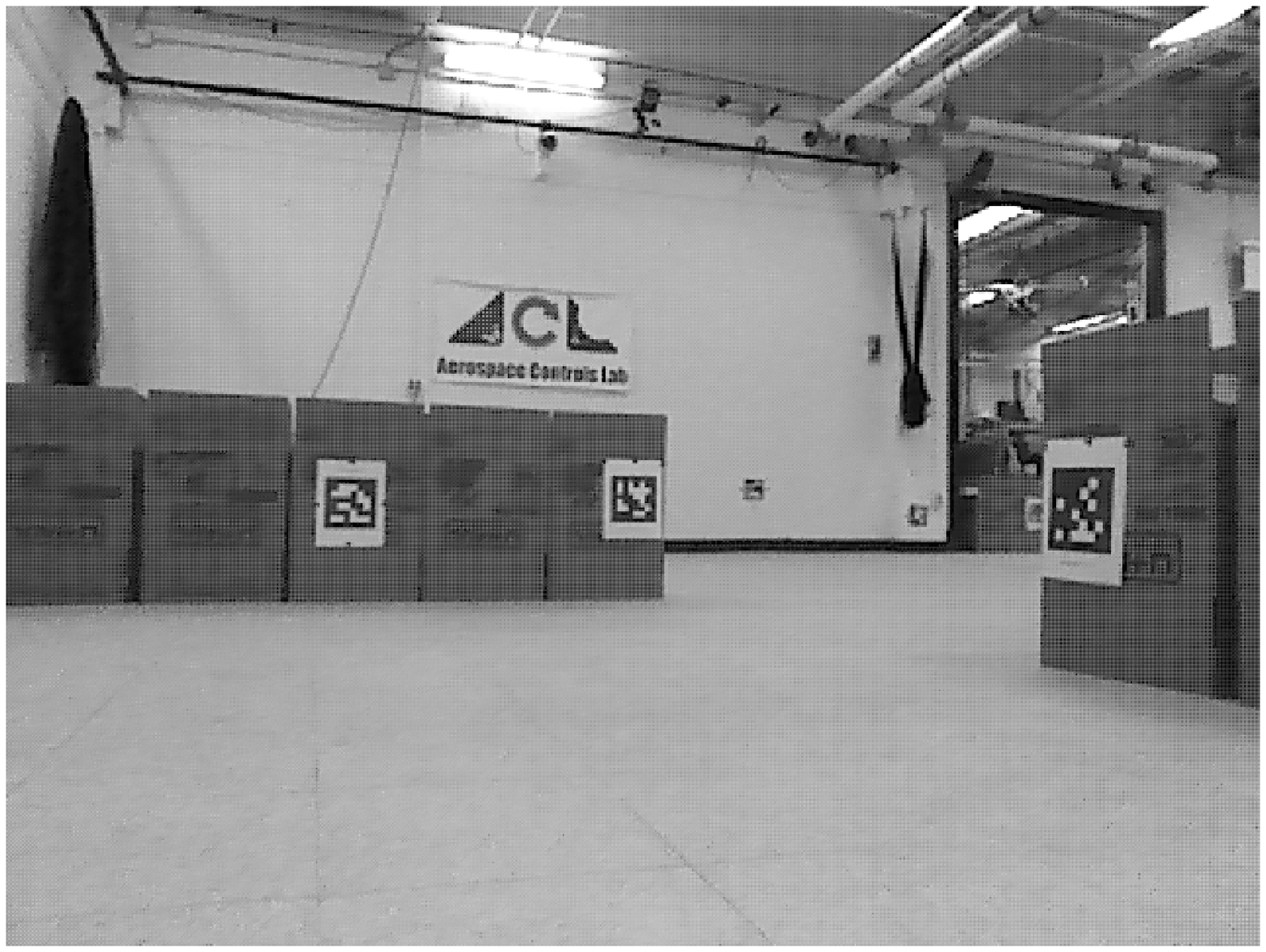}
	\caption{\small Views of the space. An GPS-denied indoor environment with april tags as features.}\label{fig:hard_view}	
\end{figure*}

Figure \ref{fig:map_compare} shows the maps generated by three different methods. The proposed TFG-based active SLAM method (Figure \ref{fig:map_AS}) effectively recovers the space even with disturbance. The map generated by human-operated data (Figure \ref{fig:map_human}) captures the basic structure, but missed some obstacles on the wall and some surfaces on the obstacles in the middle of the room. This is mainly because human operators do not have a metric model of the environment, and cannot tell if some place is well explored. Finally, there is significant distortion in grid maps (Figure \ref{fig:map_grid}). After the disturbance, the map completely drifted. Laser scans do not use any features in the environment as landmarks, thus once the odometry has drifted, it is very hard to correct even if the robot comes back to the original place.

\begin{figure*}[t]
\begin{subfigure}[b]{0.32\textwidth}
	\centering	\includegraphics[height=1.5in, trim = 70 0 50 0,clip]{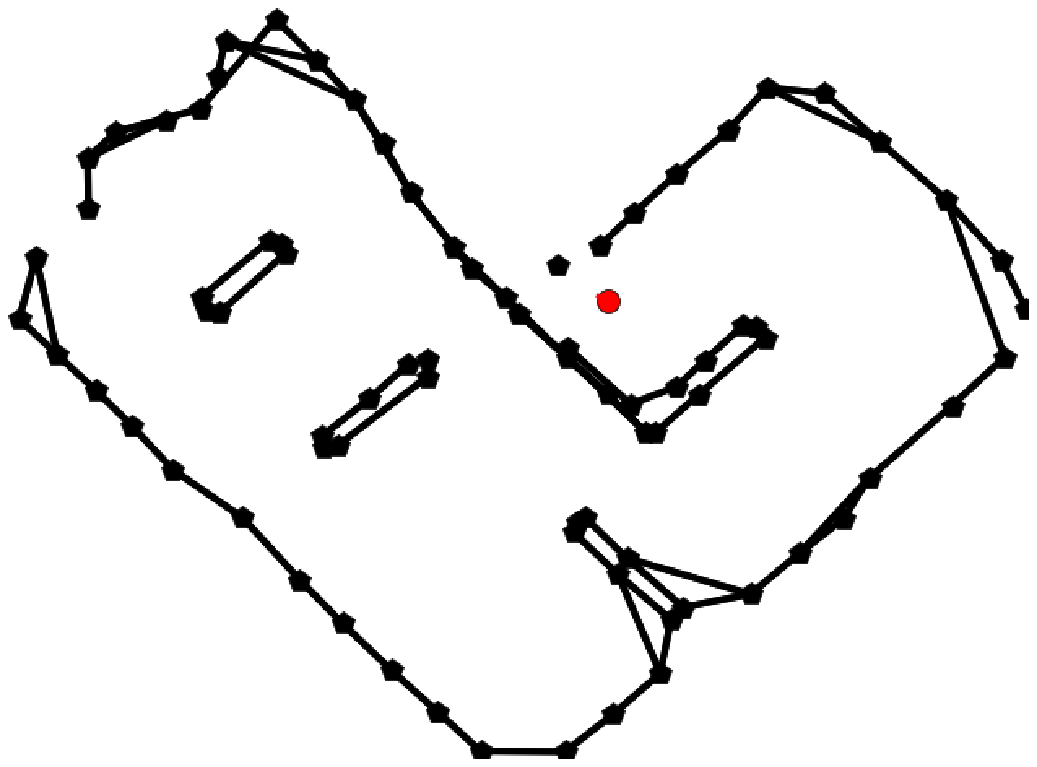} \caption{Active Slam}\label{fig:map_AS}
\end{subfigure}			
\begin{subfigure}[b]{0.32\textwidth}
	\centering	\includegraphics[height=1.5in, trim = 70 0 50 0,clip]{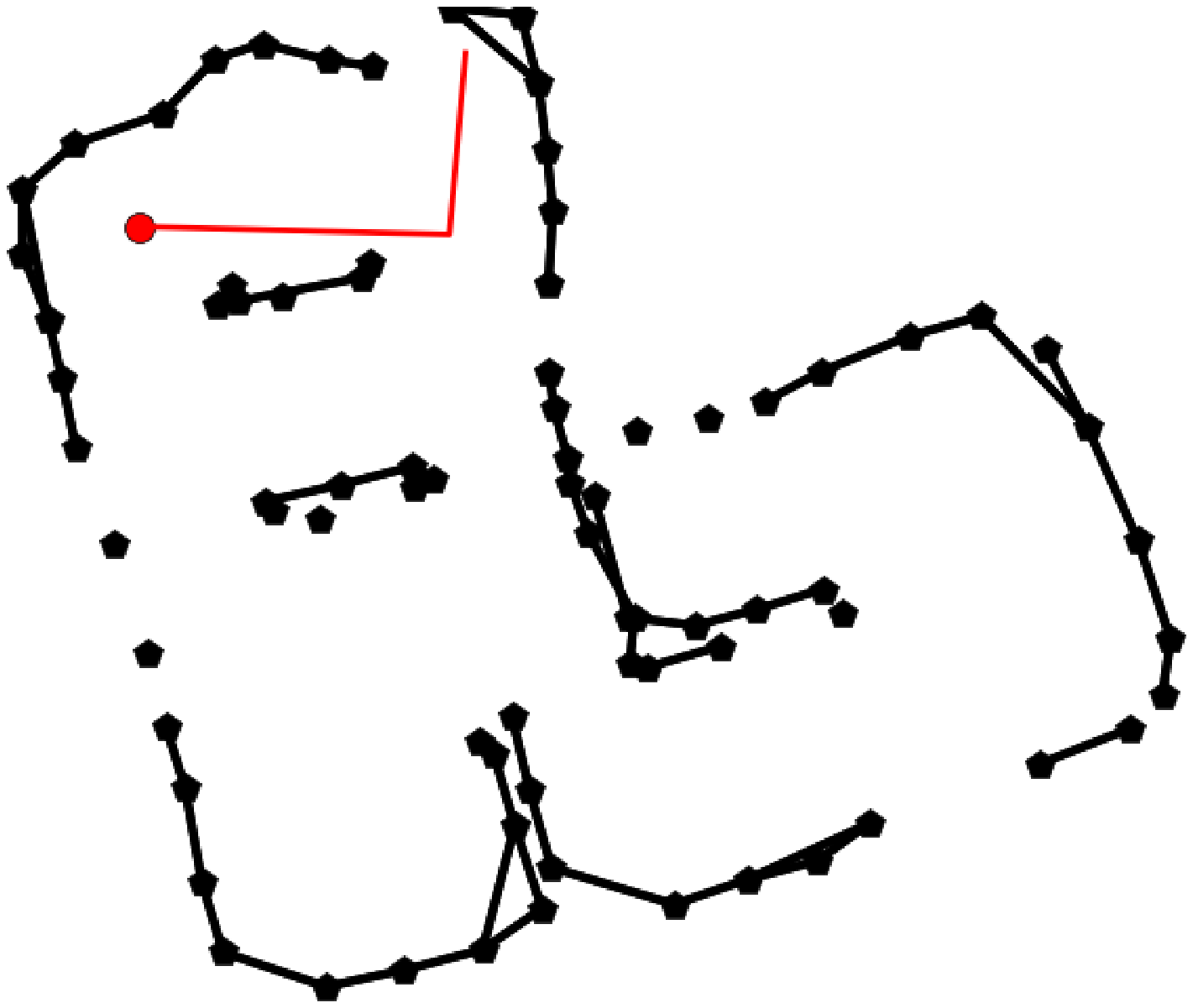} \caption{Expert Operated}\label{fig:map_human}
\end{subfigure}	
\begin{subfigure}[b]{0.35\textwidth}
	\centering	\includegraphics[height=1.5in, trim = 0 0 0 0,clip]{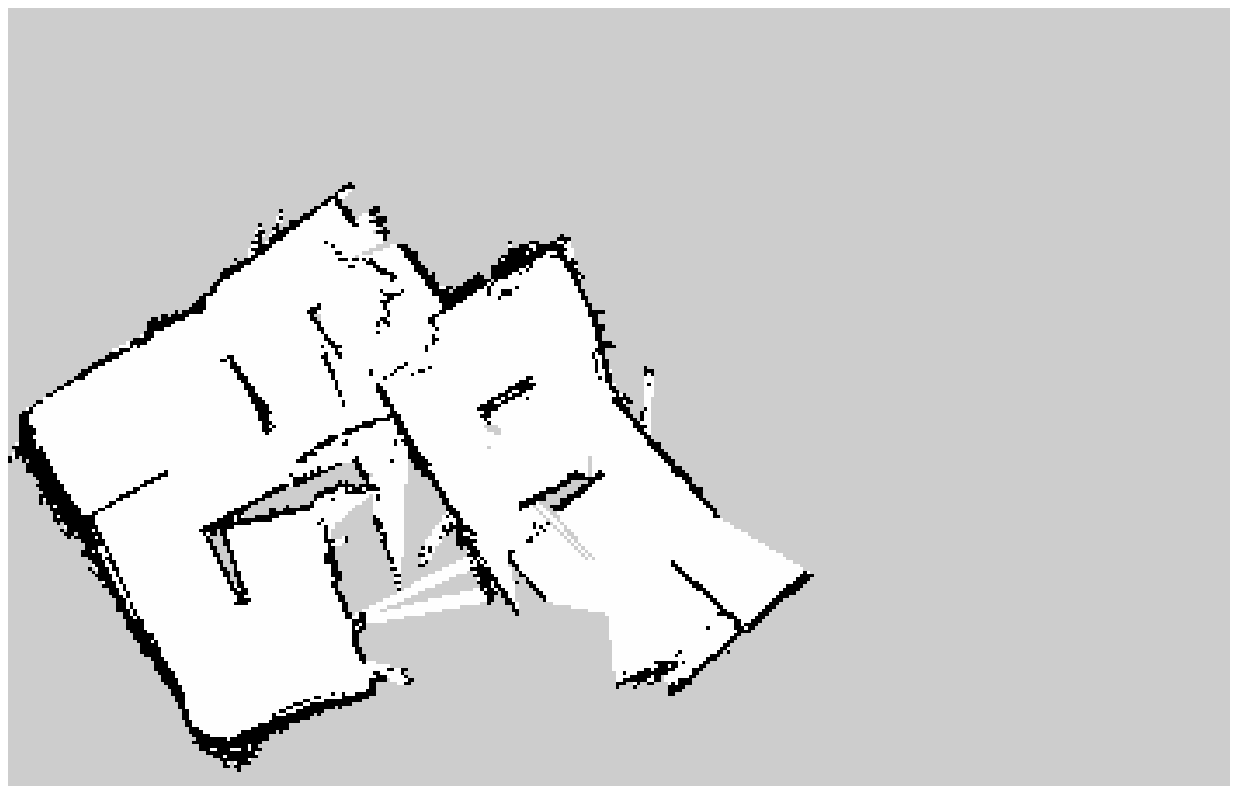} \caption{Grid map}\label{fig:map_grid}
\end{subfigure}	
\caption{\small Comparison of different policies. TFG-based active SLAM method effectively recovers the space even with disturbance. The map by human-operated data captures the basic structure, but missed some obstacles on the wall and some surfaces on the obstacles in the middle of the room. The grid map has significant distortion and completely drifted after the disturbance.}\label{fig:map_compare}
\end{figure*}

\section{Conclusion} \label{sec:conclusion}
This paper contributed to the problem of active simultaneous localization and mapping (SLAM) in three ways:
\begin{enumerate}
	\item proposed a topological feature graph (TFG) that extends point estimates in SLAM to geometry representation of the space.
	\item An information objective that directly quantifies uncertainty of a TFG. It captures correlations between robot poses and features in the space in a unified framework, thus new feature observations can help close loops and reduce uncertainties on observed features. The exploration and exploitation naturally comes out of the framework for a given feature density.
	\item An efficient sampling-based path planning procedure within the TFG, which enables active SLAM.
\end{enumerate}

Future work includes extending the algorithm to visual feature/object detection and association, and extending the TFG to work with 3D active SLAM.

\section*{ACKNOWLEDGMENTS}
\noindent This research is supported in part by ARO MURI grant W911NF-11-1-0391, ONR grant N00014-11-1-0688 and NSF Award IIS-1318392.

\clearpage 
\balance
\bibliographystyle{unsrt}
\bibliography{AliAgha,Beipeng,liam_refs,BIB_all/ACL_all,BIB_all/ACL_Publications,BIB_all/ACL_bef2000}


\end{document}